\newtheorem{theorem}{Theorem}
\newtheorem{lem}[theorem]{Lemma}
\newtheorem{definition}{Definition}
\NewDocumentEnvironment{mylemma}{m}%
  {%
   \begin{lem}
  }%
  {\end{lem}}
\def\int{\displaystyle\mathop {\mbox{\rm int}}}    % interior
\def\sup{\displaystyle\mathop {\mbox{\rm sup}}}
\newcommand{\bu}{{\bf u}}
\newcommand{\bv}{{\bf v}}
\newcommand{\bbp}{{\bf P}}
\newcommand{\bba}{{\bf A}}
\newcommand{\bbb}{{\bf B}}
\newcommand{\bbq}{{{\bf Q}}}
\newcommand{\bbr}{{{\bf R}}}
\newcommand{\bbs}{{\bf S}}
\newcommand{\bbu}{{\bf U}}
\newcommand{\bbh}{{\bf H}}
\newcommand{\bbg}{{\bf G}}
\newcommand{\bbv}{{\bf V}}
\newcommand{\bby}{{\bf Y}}
\newcommand{\bbc}{{\bf C}}
\newcommand{\bbw}{{\bf W}}
\newcommand{\bbm}{{\bf M}}
\newcommand*{\rom}[1]{\expandafter\@slowromancap\romannumeral #1@}
\def\thanks#1{\protected@xdef\@thanks{\@thanks
        \protect\footnotetext{#1}}}
\begin{document}
\title{AdaRankGrad: Adaptive Gradient-Rank and Moments for Memory-Efficient LLMs Training and Fine-Tuning}
\author{Yehonathan Refael~~~~Jonathan Svirsky~~~~ Boris Shustin\\ Wasim Huleihel~~~~ Ofir Lindenbaum\thanks{Y. Refael  W. Huleihel are with the Department of Electrical Engineering-Systems at Tel Aviv University, {T}el {A}viv 6997801, Israel (e-mails:  \texttt{refaelkalim@mail.tau.ac.il, wasimh@tauex.tau.ac.il}). J. Svirsky and O. Lindenbaum are with the Faculty of Engineering, Bar Ilan University, Ramat Gan, Israel (e-mails:  \texttt{\{svirskj, ofir.lindenbaum\}@biu.ac.il}). B. Shustin is with the Mathematical Institute, University of Oxford, Oxford, UK \texttt{boris.shustin@stfc.ac.uk}.}}
\maketitle

\begin{abstract}
Training and fine-tuning large language models (LLMs) come with challenges related to memory and computational requirements due to the increasing size of the model weights and the optimizer states. Various techniques have been developed to tackle these challenges, such as low-rank adaptation (LoRA), which involves introducing a parallel trainable low-rank matrix to the fixed pre-trained weights at each layer. However, these methods often fall short compared to the full-rank weight training approach, as they restrict the parameter search to a low-rank subspace. This limitation can disrupt training dynamics and require a full-rank warm start to mitigate the impact. 
In this paper, we introduce a new method inspired by a phenomenon we formally prove: as training progresses, the rank of the estimated layer gradients gradually decreases, and asymptotically approaches rank one. Leveraging this, our approach involves adaptively reducing the rank of the gradients during Adam optimization steps, using an efficient online-updating low-rank projections rule. We further present a randomized SVD scheme for efficiently finding the projection matrix. 
Our technique enables full-parameter fine-tuning with adaptive low-rank gradient updates, significantly reducing overall memory requirements during training compared to state-of-the-art methods while improving model performance in both pretraining and fine-tuning. Finally, we provide a convergence analysis of our method and demonstrate its merits for training and fine-tuning language and biological foundation models.
\end{abstract}

\section{Introduction}

Large language models (LLMs) have gained significant attention due to their impressive ability to handle various tasks, such as dialogue-based systems and text completion. Both supervised fine-tuning and additional pre-training can further enhance their performance across tasks and domains. However, training these models presents significant computational and memory challenges. This is because performing the gradient updates requires storing billions of LLM's trainable parameters along with the optimizer state (e.g., gradients and moments). In Adam, for example, the gradients and the estimated first and second moments triple the size of the model itself \citep{xu2024understanding,brown2022efficient,kim2023memory}. 

To tackle the challenges associated with LLM fine-tuning, researchers have developed various optimization techniques to reduce memory usage during model training. One key approach that has emerged is Parameter-efficient fine-tuning (PEFT) \citep{han2024parameterefficientfinetuninglargemodels}, which enables the adaptation of pre-trained language models (PLMs) to different tasks without the need to fine-tune all model parameters. A prominent method within PEFT is the Low-Rank Adaptation (LoRA) algorithm, introduced by \cite{hu2021lora}. LoRA reparameterizes a weight matrix $ \bbw \in \mathbb{R}^{m \times n} $ into $ \bbw = \bbw_0 + \bbb \bba $, where $ \bbw_0 $ is a frozen full-rank matrix, and $ \bbb \in \mathbb{R}^{m \times r} $ and $ \bba \in \mathbb{R}^{r \times n} $ are low-rank adaptors. Since $ r \ll \min(m, n) $, the low-rank adaptors $ \bba $ and $ \bbb $ require fewer trainable parameters, reducing memory usage. LoRA has been widely adopted for fine-tuning, with several variants emerging, including LoRA+, which uses different learning rates for the two matrices \citep{Chen2023}, Adaptive LoRA, which adapts the rank of the matrices during training \citep{Wang2023}, and Sparse LoRA, which introduces sparsity to the matrices to further reduce computational cost \citep{Xu2023}. These methods have been demonstrated to enhance the efficiency and performance of LLM fine-tuning for various tasks.

Despite its advantages, recent research has identified some limitations of low-rank reparameterization. For example, LoRA may not achieve the same performance levels as full-rank fine-tuning \citep{meng2024periodiclorabreakinglowrankbottleneck} and might require initial full-rank model training as a warm-up before effectively utilizing the low-rank subspace \citep{lialin2024}. These issues may stem from the fact that optimal weight matrices are not inherently low-rank or from changes in gradient training dynamics introduced by the reparameterization. In addition, LoRA keeps the tuning layer shapes in the base model static without dynamic adjustments. Another approach by \cite{he2022sparseadapter} dynamically adjusts tuning parameters during training, and \citep{zhang2023adalora,svirsky2024finegates} gradually reduces tuning parameters. 
%Still, neither of these methods improves the inference efficiency of the fine-tuned model.

Until recently, the pre-training of large language models (LLMs) has been primarily limited to corporations and governments with substantial computational and memory resources. The significant challenges posed by the enormous memory and computational requirements made it impractical for the average home user. To illustrate this challenge, we take the following as an example. Training a Mistral $7$B model from scratch poses substantial memory challenges. Given its $7$ billion parameters, a single update step requires approximately $70$ GB of memory: $14$ GB for the model parameters, $42$ GB for Adam optimizer states and gradients, and $14$ GB for activations. Consequently, consumer-level GPUs like the NVIDIA RTX $3090$, which has $24$ GB of VRAM, are inadequate for handling such a large-scale training task. 

To overcome this challenge, the study in \citep{zhao2024galore} introduced a training strategy called GaLore that enables full-parameter learning while being more memory-efficient than traditional low-rank adaptation methods such as LoRA. The core idea behind GaLore is to exploit the slowly changing low-rank structure of the gradient $ \bbg \in \mathbb{R}^{n \times m} $ of the weight matrix $ \bbw $, rather than approximating the weight matrix itself as low-rank. GaLore significantly improved memory efficiency, reducing optimizer state memory usage by up to 65.5\%. The following noticeable variant is Q-GaLore \citep{dettmers2023qlora}, which combines low-rank gradient projection with INT4 quantization to further reduce memory usage, and an additional parallel variant would be ReLoRA \citep{lialin2024} employed in pre-training-by periodically updating $ \bbw_0 $ using previously learned low-rank adaptors. We found that Galore to be suboptimal since it arbitrarily requires pre-defining a fixed low-rank size for the gradient projection/low-rank-approximation, while gradients rank gradually diminish during training down to rank one. Additionally, GaLore uses a fixed window size for the number of iterations between updates to the subspace onto which the gradients are projected, keeping this window size constant. Finally, Galore does not transform (adjust) the first and second moments at any update of projection subspace, which we empirically found degrading the potential performance. We suggest an inner transformation scheme of the moments at any projection updates. 

\paragraph{Our approach and theoretical results.} In this paper, we introduce a new training method aimed at optimizing memory efficiency in the training or fine-tuning of large language models (LLMs), while also improving convergence rates and overall performance. Our method leverages two key properties of LLMs. First, we present a novel theoretical finding that shows how the approximate rank of the LLM gradient matrices decreases progressively throughout the training process, even under basic Stochastic Gradient Descent (SGD) settings, asymptotically approaching rank one. Note that previous studies have only demonstrated an implicit upper bound on the rank of the gradient, which is far from being tight. For example, \cite{zhao2024galore} showed that the rank of the gradient satisfies $\text{rank}\left( \bbg ^{n \times m} \right)<\min\{n,m\}/2$. Second, as highlighted in previous research \citep{gromov2024,refael2024,jaiswal2024}, the depth of a layer (how far from input/output) and its architectural design contribute differently to the model's performance. Specifically, when perturbations from the same distribution are applied across various layers, the impact on accuracy varies significantly. This indicates that the optimization steps have less influence on the model's performance for certain layers, depending on their depth and architecture type. Noise in these layers has a smaller impact on the overall task, meaning that the gradients in these layers carry less important information. This results in naturally lower-rank update steps (gradients).

Building upon these two insights and to address the limitations of the LoRA variants, we propose a method that enables full-parameter learning while dramatically reducing memory requirements and computational complexity through adaptive low-rank gradient projections during the Adam update step.
For each gradient tensor $\bbg_t^j \in \mathbb{R}^{r \times n}$ at layer $j \in [L]$ and iteration $t$, AdaRankGrad efficiently identifies a unique set of significant projection directions (subspace) $\bbp_t^j \in \mathbb{R}^{r_j \times n}$ along which the gradient $\bbg_t^j$ exhibits %
\begin{figure}[t!]
\centering
\includegraphics[width=0.6\linewidth]{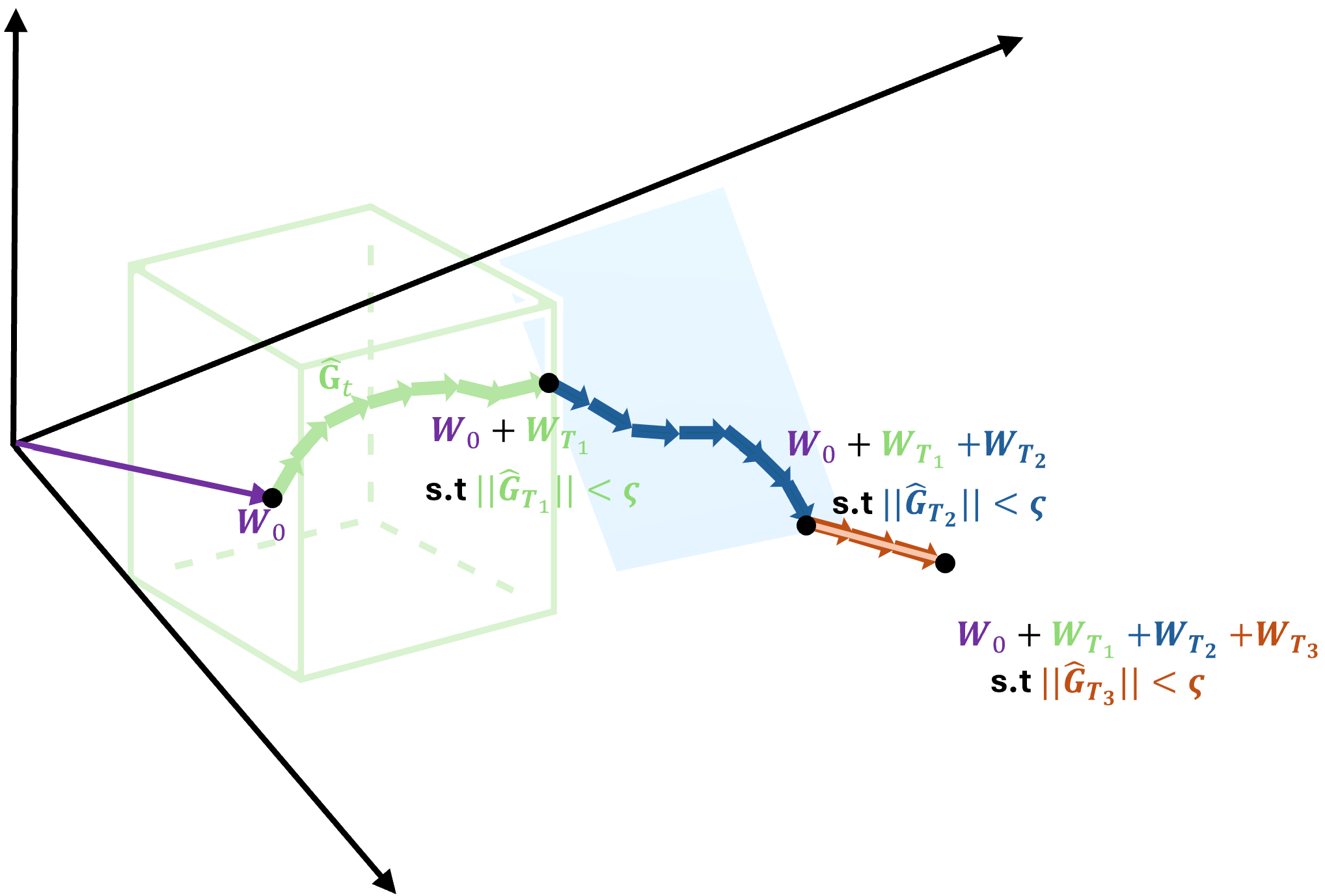} 
\caption{The illustration shows how AdaRankGard \ref{alg::AdaRankGrad} is trained. First, the gradients $\bbg_t$ are projected into a 3D space (in this example), represented as $\Hat{\bbg}_t^{3\times m}=\bbp^{3\times n}_t\bbg_t^{n\times m}$. As convergence occurs, the gradient's dimension decreases to a 2D space and then to a 1D space. This dimensionality reduction indicates convergence while efficiently using memory.}
\label{fig:wrapfig}
\end{figure}
the largest changes, where $r_t^j$ is the lowest possible rank to still maintain a predefined information fraction (given threshold) relative to the original, non-projected gradient. Practically, $\bbp_t^j\bbg_t^j$ is the low-projected-gradient and ${\bbp_t^j}^\top\bbp_t^j\bbg_t^j$ is the best low-rank approximation the of the gradients $\bbg_t^j$ that embodies the required fraction of its information. The projections $\bbp_t^j$ are being adaptively updated throughout training (based on the convergence criteria of the gradients on the projected subspace), where their rank $r^j_t$ is dictated by preserving the given information threshold. The method ensures: (1) The method determines the optimal projection dimension for each layer's gradient tensor independently, adjusting dynamically throughout training. This rank adjustment leverages property we prove that the effective dimensionality of the full gradients gradually decreases over time, allowing updates to be performed in a lower-dimensional projection space, thereby reducing memory usage. (2) The projection matrix for each layer's gradients is updated based on a convergence criterion within their respective subspace. This ensures updates occur precisely when needed, avoiding premature or delayed transitions between subspaces, resulting in faster overall convergence.
\begin{table}[htb]
\caption{Comparison between AdaRankGrad, GaLore, and LoRA. Assume $\bbw \in \mathbb{R}^{n \times m} (n \geq m)$, constant rank $r$, and adaptive-rank $r_\text{adap}$ (with initial-rank $r_\text{init} = r$).}
\centering
\footnotesize
\begin{tabular}{lccc}
\hline
& AdaRankGrad & GaLore & LoRA \\
\hline
Weights & $n m$ & $n m$ & $n m + n r + m r$ \\
\hline
Optim States ($r_\text{adap} < r$) & $n r_\text{adap} + 2 m r_\text{adap}$ & $n r + 2 m r$ & $2 n r + 2 m r$ \\
\hline
Multi-Subspace & $\checkmark$ & $\checkmark$ & $\mathbf{x}$ \\
Adaptive-Subspace-Dimension & $\checkmark$ & $\mathbf{x}$ & $\mathbf{x}$ \\
Adaptive-Subspace-Updates & $\checkmark$ & $\mathbf{x}$ & $\mathbf{x}$ \\
Pre-Training & $\checkmark$ & $\checkmark$ & $\mathbf{x}$ \\
Fine-Tuning & $\checkmark$ & $\checkmark$ & $\checkmark$ \\
\hline
\end{tabular}
\end{table}
\section{Related Work and Background}
\paragraph{Memory efficient optimizers.}
Memory-efficient optimization has been a recent focus of research. Multiple studies have aimed to reduce the memory requirements of gradient statistics in adaptive optimization algorithms \citep{shazeerAdafactorAdaptiveLearning,anilMemoryEfficientAdaptive}. One common approach is quantization, which helps decrease the memory footprint of optimizer states \citep{liMemoryEfficientOptimizers2023}. Additionally, recent advancements have suggested reducing the memory used by weight gradients by integrating the backward operation with the optimizer update \citep{lvAdaLomoLowmemoryOptimization2023,lvFullParameterFinetuning2023}. This characteristic has been leveraged to reduce memory usage during training processes \citep{gooneratneLowrankGradientApproximation2020, huangLowRankGradientDescent2023, modoranuErrorFeedbackCan2024}.

\paragraph{Low-rank gradient optimization.} The phenomenon of low-rank gradients naturally arises during the training of neural networks, a subject that has been extensively examined both theoretically and practically, e.g., \cite{zhaoZerOInitializationInitializing2022, cossonLowRankGradientDescent2023, yang2023spectral}. This characteristic low-rank structure gradients has been leveraged to reduce memory usage during training processes \cite{gooneratneLowrankGradientApproximation2020, huangLowRankGradientDescent2023, modoranuErrorFeedbackCan2024}, and results in a reduced computational complexity as compared to standard gradient descent methods.

\paragraph{Adam optimization.} 

Arguably, among the most popular optimization methods used for training large language models (LLMs) are the \emph{Adam optimizer} \citep{kingma2017adammethodstochasticoptimization} and its variant, \emph{AdamW} \citep{loshchilov2019decoupledweightdecayregularization}, which incorporates weight decay for regularization. However, it is well-established that Adam optimization has higher memory complexity compared to other optimization alternatives. To illustrate this, let us briefly review how the Adam optimization algorithm operates. First, we need to establish some notation. Consider a neural network denoted as $\Phi(\cdot;\boldsymbol\theta)$, which consists of $L$ layers and is parameterized by {\footnotesize{$\boldsymbol{\theta} \triangleq \left[\bbw_1^{d_1 \times d_0}, \ldots, \bbw_{L-1}^{d_{L-1} \times d_{L-2}}, \bbw_{L}^{d_{L} \times d_0^{L-1}}\right]$}}. Here, $\bbw_{i}$ represents the weights tensor parameters associated with the $i$-th layer, for $i \in [L]$. In the following, let $t \in \mathbb{N}$ represent the $t$-th step of the Adam optimization algorithm. Then, we recall that the single update step in Adam is given by,
\begin{equation}
\textbf{Adam}\left\{
\begin{aligned}
&\quad \bbg_t  = \nabla \Phi\left(\boldsymbol\theta_{t-1}\right), \nonumber\\
&\quad \bbm_t  = \beta_1 \bbm_{t-1} + \left(1 - \beta_1\right) \bbg_t, \nonumber\\
&\quad \bbv_t  = \beta_2 \bbv_{t-1} + \left(1 - \beta_2\right) \bbg_t^2, \nonumber\\
&\quad \hat{\bbm}_t  = \bbm_t / \left(1 - \beta_1^t\right), \nonumber\\
&\quad \hat{\bbv}_t = \bbv_t / \left(1 - \beta_2^t\right), \nonumber\\
&\quad \boldsymbol\theta_t  = \boldsymbol\theta_{t-1} - \alpha \hat{\bbm}_t / \left(\sqrt{\hat{\bbv}_t} + \epsilon\right).
\nonumber
\end{aligned} \right.
\end{equation}
Specifically, at time step $t$, $\bbg_t$ denotes the backpropagated gradient matrix, i.e., $\nabla \Phi\left(\boldsymbol\theta_{t-1}\right)$. The exponentially weighted moving averages of the first and second moments are denoted by $\bbm_t$ and $\bbv_t$, respectively, with their bias-corrected counterparts given by $\hat{\bbm}_t$ and $\hat{\bbv}_t$. The AdamW optimizer updates the model parameters at step $t$ according to the rule, {\footnotesize{$\boldsymbol{\theta}_t = \boldsymbol{\theta}_{t-1} - \alpha \left( \frac{\hat{\bbm}_t}{\sqrt{\hat{\bbv}_t} + \epsilon} + \lambda \boldsymbol{\theta}_{t-1} \right)$}},
where $\lambda\geq0$ is the weight decay rate (for Adam $\lambda=0$), and all operations are performed element-wise. In this equation, $\beta_1$ and $\beta_2$ control the decay rates for the moving averages of the moments, $\alpha$ is the learning rate, and $\epsilon$ is a small constant to avoid division by zero. Notably, since Adam/W requires storing both $\bbm_t$ and $\bbv_t$ at each time step, it incurs an additional memory footprint of $2mn$.
 
While existing approaches \citep{zhao2024adapprox,vyas2024adamem,okewu2020parameter} focus on low-rank approximations of the first and second moments with the goal of reducing memory requirements, we propose to approximate the gradients by a low-rank factorization. Consequently, in our scheme the moments are integrally constrained onto this reduced dimension, and thus we gain both benefits.

\section{Method and Main Results}

\subsection{Theoretical Motivation: Gradually Gradient Rank Vanishing}\label{sec::Theoretical_Motivation}
\begin{figure}[t!]
\centering{\includegraphics[width=0.7\linewidth]{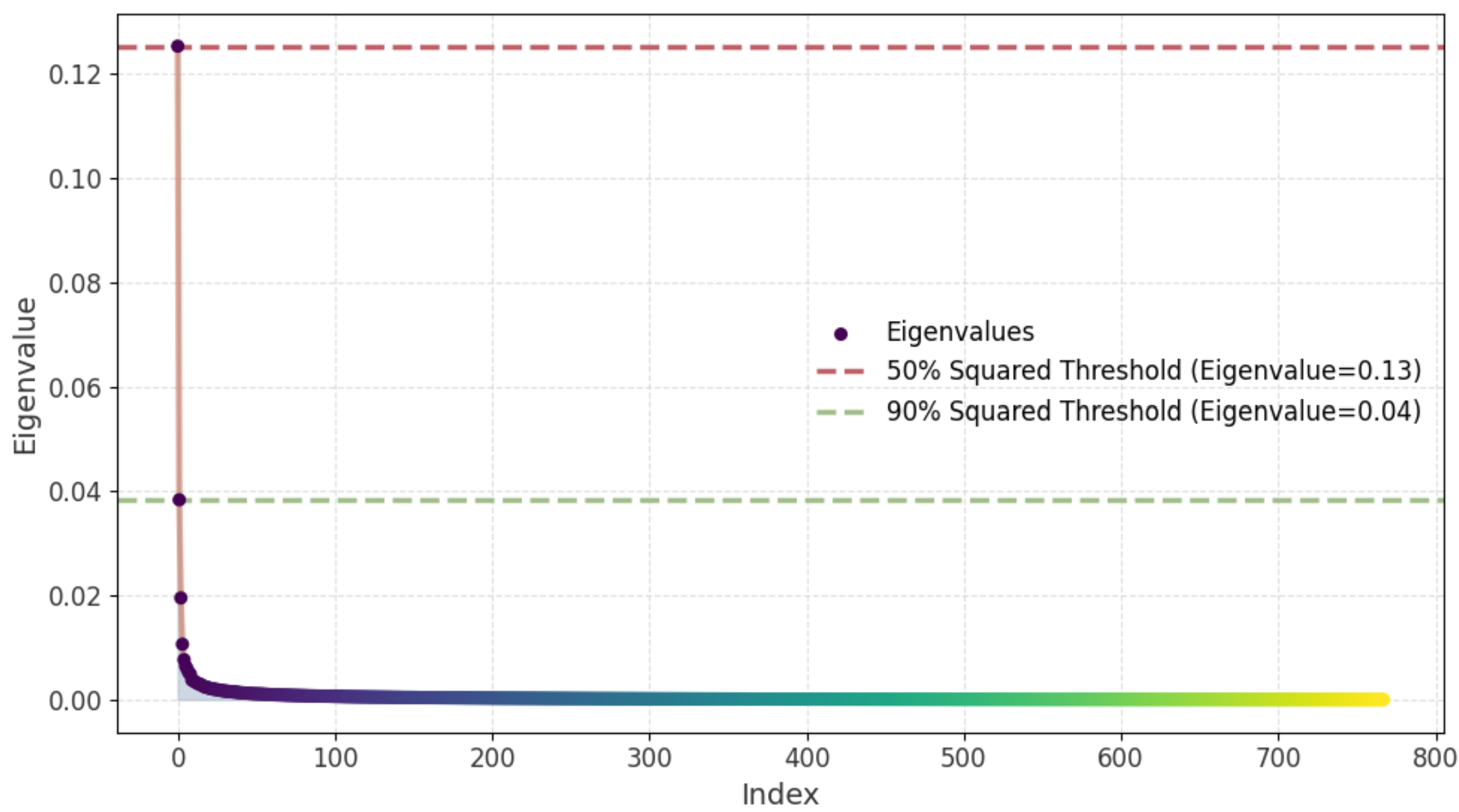}} 
\vskip -0.1 in
\caption{The figure illustrates the exponential decay of eigenvalues in the MLP layer's gradient, at the first iteration of fine-tuning RoBERTa-Base \citep{liu2019roberta} model, on the MRPC task, from GLUE \citep{wang2019superglue}. Notably, the red line indicates that 50\% of the gradient information (in terms of squared norm ratio) is captured by the first eigenvalue, while the green line shows that 90\% is contained within the first two eigenvalues.
% The figure illustrates the exponential decay of the eigenvalues in the gradient of the MLP layer during the first iteration of fine-tuning the RoBERTa-Base model on the MRPC task of the GLUE benchmark. Remarkably, the red line indicates that $50\%$ of the gradient information is captured by the first eigenvalue (w.r.t squared norm ratio fraction), while the green line shows that $90\%$ is contained within the first two eigenvalues.
}
\label{eigenvalues_deacy}
\end{figure} As mentioned in the introduction, a few recent empirical results (e.g., \citep{jaiswal2024galore,zhao2024galore,lialin2023}), demonstrate that the gradients, when training or fine-tuning LLM's, are ``approximately low-rank". As an example, this phenomenon can be observed in Figure~\ref{eigenvalues_deacy}, where it is evident that the squared norm of the gradient's singular values decay to zero exponentially fast.

As hinted above, this phenomenon is only true in the approximate sense; roughly speaking, only very few eigenvalues hold almost all the information captured by the gradient. Accordingly, a low-rank matrix approximates the underlying gradient up to a negligible approximation error. The practical implication is that while the weight matrices are not necessarily low-rank, training certain high-rank layers with low-rank based-gradient updates is possible. To make the above discussion precise, consider the following definition for approximate low-rank matrices.\sloppy
\begin{definition}[Approximate low-rank matrix] \label{def::Approximate_rank_of_matrix} A matrix $\bba\in\mathbb{R}^{n\times m}$ is called $(\eta,\varepsilon)$-approximately rank-$r$, if there exist $\eta \in[0,1)$, $\varepsilon>0$, and a matrix $\bba_{\mathsf{app},r}\in\mathbb{R}^{n\times m}$ with $\mathsf{rank}(\bba_{\mathsf{app},r})=r$ and $r<\min\{n,m\}$, such that,
\begin{equation}
\left\|\bba-\bba_{\mathsf{app},r}\right\|_{F} \leq \eta\cdot\|\bba\|_{F} + \varepsilon.\label{eqn:applowrank}
\end{equation}
\end{definition}
As it turns out, it can be shown (see, e.g., \citep{golub2013matrix}) that the optimal $\bba_{\mathsf{app},r}$ minimizing the approximation error in the left-hand-side of \eqref{eqn:applowrank}, can be obtained by applying an SVD on $\bba$ and then retaining only the top $r$ singular values and their corresponding singular vectors. Mathematically, we have $\bba_{\mathsf{app},r} = \sum_{i=1}^r \sigma_i \bu_i \bv_i^\top$,
where $\{\sigma_i\}_i$ are the singular values of $\bba$, and $\{\bu_i\}_i$ and $\{\bv_i\}$ are the corresponding left and right singular vectors, respectively. The approximation error is in turn given by $\left\| \bba - \bba_{\mathsf{app},r} \right\|_{F}^2 = \sum_{i=r+1}^{\min \{m, n\}} \sigma_i^2$. This construction gives an $(\eta_\bba,0)$-approximately rank-$r$ matrix, with the minimal $\eta_\bba$ possible. 

Recently in \citep{zhao2024galore}, the structure of the gradient for a wide family of nonlinear networks known as ``reversible networks" \citep{tian2021} was studied,\footnote{It can be shown that this family includes many different kinds of layers, such as, linear layers (MLP and Conv.), and (leaky) ReLU non-linearity.} defined as follows.
\begin{definition}(Reversibility \citep{tian2021})
Layer $\ell\in [L]$ is reversible if there is a $\bbg_\ell(\boldsymbol{x} ; \boldsymbol{\theta}) \in \mathbb{R}^{n_\ell \times n_{\ell-1}}$ such that
the gradient after nonlinearity satisfies $\tilde{\boldsymbol{g}}_{\ell-1}=\bbg_\ell^{\top}(\boldsymbol{x} ; \boldsymbol{\theta}) \bbp_\ell^{\top}(\boldsymbol{x} ; \boldsymbol{\theta}) \tilde{\boldsymbol{g}}_\ell$, for some matrix $\bbp_\ell(\boldsymbol{x} ; \boldsymbol{\theta}) \in \mathbb{R}^{n_\ell \times n_\ell}$. A network is reversible if all of its layers are reversible.
\end{definition}
For simplicity of notation, we use $\bbg_t^\ell$ to denote $\left[\bbg_\ell(\boldsymbol{x} ; \boldsymbol{\theta})\right]_t$, where $t\in\mathbb{N}$ is the iteration index in the optimization process. Furthermore, when it is clear from the context, we omit the layer index $\ell$ from our notations. Assuming reversibility and SGD weight update (i.e., $\bbw_t=\bbw_{t-1}+\alpha \bbg_{t-1}$), it is shown in \cite{zhao2024galore} that for both $\ell_2$ and cross entropy losses, the gradient is of the structure form $\bbg=\frac{1}{N} \sum_{i=1}^N\left(\bba_i-\bbb_i W \bbc_i\right)$, where $N$ is the batch size, $\{\bba_i\}_{i=1}^N$ are input-dependent matrices, and $\{\bbb_i,\bbc_i\}_{i=1}^N$ are certain positive semi-definite (PSD) matrices. 
Furthermore, it was proven that if the gradient $\bbg_t$, has the above structure for all $t\geq \mathsf{t}_0$, for some $\mathsf{t}_0\in\mathbb{N}$, then, 
the stable rank $\operatorname{sr}\left(\bbg_t\right)\triangleq\frac{\|\bbg_t\|_F}{\|\bbg_t\|_2}$ satisfies,
$$
\operatorname{sr}\left(\bbg_t\right) \leq \operatorname{sr}\left(\bbg_{\mathsf{t}_0}^{\|}\right)+\left(\frac{1-\eta \lambda_2}{1-\eta \lambda_1}\right)^{2\left(t-\mathsf{t}_0\right)} \left.\left\|\bbg_{\mathsf{t}_0}-\bbg_{\mathsf{t}_0}^{\|}\right\|_F^2\middle/\left\|\bbg_{\mathsf{t}_0}^{\|}\right\|_2^2,\right.
$$
where $\bbs\triangleq\frac{1}{N} \sum_{i=1}^N \bbc_i \otimes \bbb_i$, $\lambda_1<\lambda_2$ denote its two smallest distinct eigenvalues, and $\bbg_{\mathsf{t}_0}^{\|}$ is the projection of $\bbg_{\mathsf{t}_0}$ onto the minimal eigenspace $\mathcal{V}_1$ of $\bbs$ that corresponds to $\lambda_1$. Accordingly, as $t\to\infty$, we get that the final stable rank is upper bounded by $\operatorname{sr}(\bbg_{\mathsf{t}_0}^{\|})$. Under the same gradient structure assumption and for the vanilla settings of the SGD weights update, we were able to prove the following stronger result. We prove that the approximated stable rank of the gradients approach one as the training process progresses. To state this result, we need to make a few notations. Let $\bbg_t=\bbu_t\Sigma_t\bbv_t^\top$ be the SVD decomposition of $\bbg_t$, and let $\bbp_t(\ell,r)=\bbu[:,\ell:r]_t\bbu[:,\ell:r]_t^\top$ be the corresponding projection matrix. When clear for the context, we omit the index $\ell$ and use $\bbp_t(r)\equiv\bbp_t(\ell=1,r)$. We have the following result.
\begin{figure}[t!]
    \centering
\includegraphics[width=10cm,height=5cm]{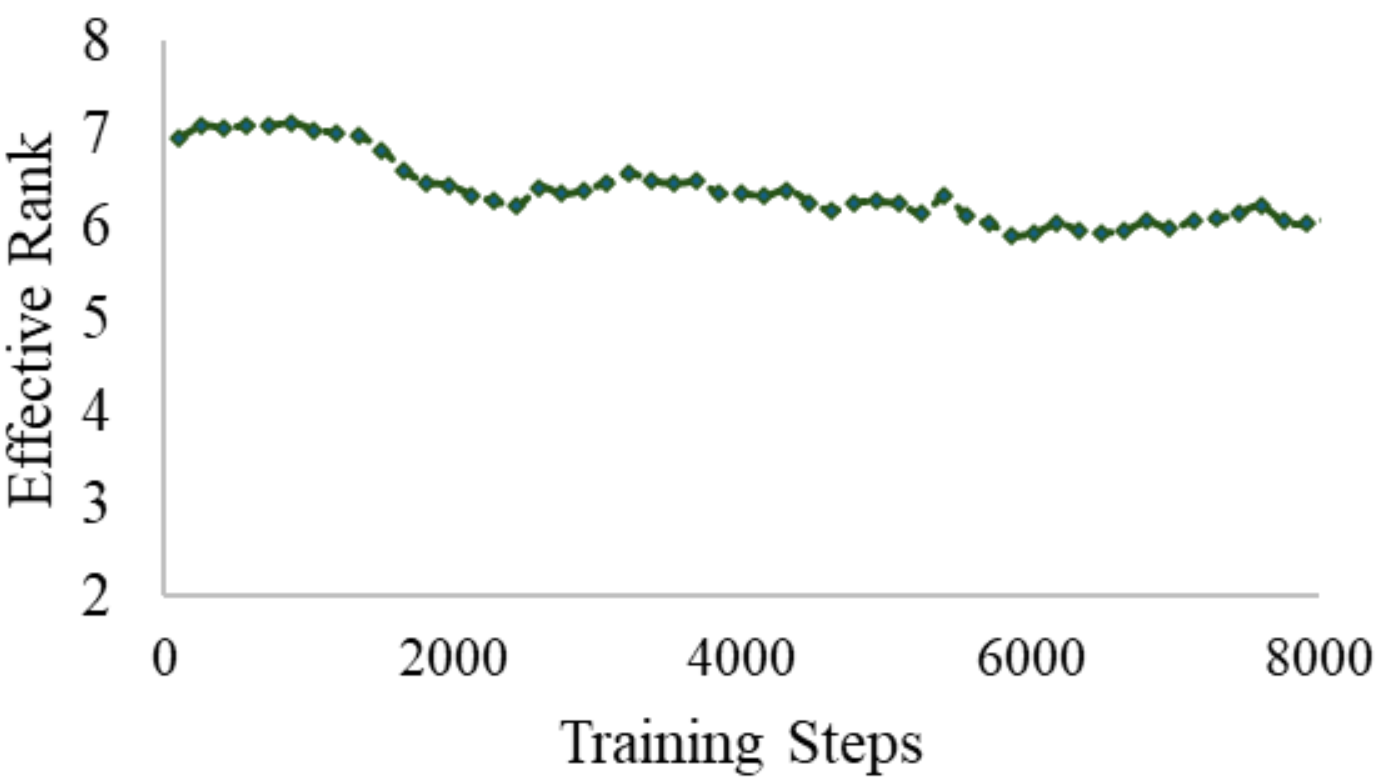}
    \caption{The figure presents the effective rank (see Section ~\ref{sec::experiments}) measured after every $100$ update steps on the RTE dataset, from GLUE \citep{wang2019superglue}.}
    \label{fig:rankvsmemory}
\end{figure}
\begin{lem}[Asymptotically rank-one]
\vskip -0.15 in
Given a reversible neural network and using the vanilla setting of SGD for weight update. Then,
$$\kappa(t)\triangleq\frac{\left\|\bbg_t-\bbp_t(1) \bbg_t\right\|_{F}^2}{\left\|\bbg_t\right\|_{F}^2}\leq O(C^{-t}),$$
for some constant $C>1$.
\label{Gradient's rank gradually diminish}
\end{lem}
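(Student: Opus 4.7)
The plan is to exploit the structural description of the gradient in reversible networks and to analyze the linear dynamical system induced by vanilla SGD. Using the gradient form $\bbg_t = \frac{1}{N}\sum_{i=1}^N (\bba_i - \bbb_i \bbw_t \bbc_i)$ from \citep{zhao2024galore} together with $\bbw_{t+1} = \bbw_t + \alpha \bbg_t$, I would derive the vectorized linear recursion $\mvec(\bbg_{t+1}) = (\bbi - \alpha\bbs)\,\mvec(\bbg_t)$ with $\bbs = \tfrac{1}{N}\sum_i \bbc_i^\top\!\otimes \bbb_i$. Since $\bbs$ is PSD (sum of Kronecker products of PSD matrices), I then diagonalize it: let $0 < \lambda_1 < \lambda_2 \le \cdots$ be its eigenvalues with orthonormal eigenvectors $\{\be_j\}$; expanding $\mvec(\bbg_0) = \sum_j c_j\be_j$ and choosing $\alpha$ small enough that $1-\alpha\lambda_j \in (0,1)$ for all $j$, the mode along $\be_1$ has the slowest decay, so
\begin{equation*}
\mvec(\bbg_t) \;=\; (1-\alpha\lambda_1)^t\bigl(c_1\be_1 + \br_t\bigr),\qquad \|\br_t\|_2 \;\le\; C_0\!\left(\tfrac{1-\alpha\lambda_2}{1-\alpha\lambda_1}\right)^{\!t},
\end{equation*}
for a constant $C_0$ depending only on the initial gradient.

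The crucial step is to argue that, when $\be_1$ is reshaped from $\mathbb{R}^{nm}$ back to $\mathbb{R}^{n\times m}$, the limiting direction is a rank-one matrix $\bbg^\star = \bu\bv^\top$. For $N = 1$ this is immediate from the Kronecker structure $\bbs = \bbc^\top\!\otimes\bbb$: its bottom eigenvector factorizes as $\bv_c\otimes\bv_b$, which reshapes to the outer product $\bv_b\bv_c^\top$. For general $N$, the mixing of Kronecker factors across samples obstructs this identification directly, and the argument requires either an additional structural hypothesis on the alignment of $\{\bbb_i,\bbc_i\}$ across the batch, or an averaging/mean-field argument. I view this step as the main technical obstacle of the proof, since the Galore bound only controls the stable rank by $\operatorname{sr}(\bbg^{\|})$, and strengthening it to rank-one requires pinning down the structure of $\bbg^{\|}$ itself.

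Granting rank-one $\bbg^\star$, the conclusion follows by a standard singular-value comparison. Reshaping the earlier decomposition to matrix form yields $\bbg_t = (1-\alpha\lambda_1)^t(c_1\bbg^\star + \bbr_t)$ with $\|\bbr_t\|_F \le C_0\bigl(\tfrac{1-\alpha\lambda_2}{1-\alpha\lambda_1}\bigr)^t$. Weyl's inequality for singular values then gives $\sigma_i(\bbg_t) \le (1-\alpha\lambda_1)^t \|\bbr_t\|_F$ for every $i \ge 2$, while $\sigma_1(\bbg_t) \ge (1-\alpha\lambda_1)^t\bigl(|c_1|\,\|\bbg^\star\|_F - \|\bbr_t\|_F\bigr)$. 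Since $\bbp_t(1)\bbg_t$ is the best rank-one approximation of $\bbg_t$, we have $\|\bbg_t - \bbp_t(1)\bbg_t\|_F^2 = \sum_{i\ge 2}\sigma_i^2(\bbg_t)$, and dividing by $\|\bbg_t\|_F^2 \ge \sigma_1^2(\bbg_t)$ yields $\kappa(t) \le O(C^{-t})$ with $C = \bigl(\tfrac{1-\alpha\lambda_1}{1-\alpha\lambda_2}\bigr)^2 > 1$, as claimed.
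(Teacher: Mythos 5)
Your setup is correct and matches the paper's starting point: the vectorized recursion $\mvec(\bbg_{t+1})=(\bbi-\alpha\bbs)\mvec(\bbg_t)$ with $\bbs=\tfrac{1}{N}\sum_i\bbc_i\otimes\bbb_i$, imported verbatim from GaLore. But the route you take from there --- pin down the limiting direction as a rank-one matrix $\bbg^\star$ and then apply Weyl's inequality --- is not the paper's route, and you correctly flag that the step you cannot close is exactly where the argument hinges. The obstruction you name is real: for $N>1$, $\bbs$ is a sum of Kronecker products, its bottom eigenspace $\mathcal{V}_1$ need not be spanned by decomposable tensors $\bv_c\otimes\bv_b$, and (worse) $\mathcal{V}_1$ can be multi-dimensional, so the projection $\bbg_{\mathsf t_0}^{\|}$ of the initial gradient onto $\mathcal{V}_1$ can have rank $L\ge 2$, in which case the reshaped limiting direction is genuinely not rank-one and $\|\bbg_t\|_F^2/\|\bbg_t\|_2^2\to\|\bbg_{\mathsf t_0}^{\|}\|_F^2/\sigma_1(\bbg_{\mathsf t_0}^{\|})^2>1$.

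The paper's proof does not try to identify $\bbg^\star$ at all. Instead it uses the Eckart--Young identity $\|\bbg_t-\bbp_t(1)\bbg_t\|_F^2=\|\bbg_t\|_F^2-\|\bbg_t\|_2^2$ to rewrite $\kappa(t)$, then replaces the denominator to get the ratio bound $\kappa(t)\le\|\bbg_t\|_F^2/\|\bbg_t\|_2^2-1$, and closes by sandwiching: GaLore's Lemma~3.3 gives $\|\bbg_t\|_F^2\le(1-\eta\lambda_2)^{2t}\|g_0^{\perp}\|_2^2+(1-\eta\lambda_1)^{2t}\|g_0^{\|}\|_2^2$, and a lower bound of the form $\|\bbg_t\|_2\ge(1-\eta\lambda_1)^t\,\tilde{\boldsymbol v}^{\top}g_0$ is used for the denominator. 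Dividing, the $(1-\eta\lambda_1)^{2t}$ factors cancel, the cross term decays like $\bigl((1-\eta\lambda_2)/(1-\eta\lambda_1)\bigr)^{2t}$, and the residual constant $\|g_0^{\|}\|_2^2/(\tilde{\boldsymbol v}^{\top}g_0)^2-1$ is declared $\le 0$. That is the paper's trade: no Weyl, no explicit limit direction, just a spectral-gap comparison of two norms. However, that last declared inequality is Cauchy--Schwarz run backwards --- $|\tilde{\boldsymbol v}^{\top}g_0|\le\|g_0^{\|}\|_2$ since $\tilde{\boldsymbol v}$ is a unit vector in $\mathcal{V}_1$ --- and it holds with equality exactly when $g_0^{\|}$ is collinear with $\tilde{\boldsymbol v}$, i.e.\ when $\bbg_{\mathsf t_0}^{\|}$ is rank-one. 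So the paper's argument silently relies on the very structural hypothesis you identified as missing; you just surfaced it where they buried it. If you want to close the gap, you must either assume the minimal eigenspace of $\bbs$ yields a rank-one $\bbg_{\mathsf t_0}^{\|}$ (a generic position / simple bottom eigenvalue condition), or weaken the conclusion to "approaches rank $L$" with $L=\mathrm{rank}(\bbg_{\mathsf t_0}^{\|})$; your Weyl-based endgame and the paper's norm-ratio endgame both go through cleanly once that hypothesis is in place.
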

The above result implies that $\bbg_t$ approaches its rank-one approximation $\bbp_t(1) \bbg_t,$ as the iteration number increases, namely, $\bbg_t$ becomes rank-one. 
The proof of Lemma~\ref{Gradient's rank gradually diminish} is relegated to Section~\ref{sec:proofs}. 
% Finally, in Fig.~\ref{fig:rankvsmemory} we illustrate this phenomenon.
Finally, in Fig.~\ref{fig:rankvsmemory} and Fig.~\ref{fig:rankvsmemory2}, we demonstrate that for a large language model (RoBERTa-base \cite{liu2019roberta}), which contains also non reversible layers, the rank decay evolves as a function of the number of update steps, in a fine-tuning task.
 \subsection{Adaptive-Low-Rank Subspace Selection}
As aforesaid, our training strategy is self-adapting by updating the low-rank projection subspace for the gradient of each layer, leveraging the phenomena raised by Lemma~\ref{Gradient's rank gradually diminish}. The most natural but computationally expensive way to search for the subspace on which a gradient has the highest variance is by applying the truncated SVD \citep{eckart1936approximation}, with a truncated rank that preserves the predefined fraction of information. In this section, we first utilize an efficient method to approximate the gradient, which is more practical than the expansive truncated SVD approximation. We then use this method to propose an algorithm that identifies the most suitable projection subspace with the smallest possible span/rank that maintains the given information threshold.

\subsubsection{Identifying projecting subspace by power iteration}
Consider a matrix $\bba \in \mathbb{R}^{n \times m}$. Finding its ``best" low-rank approximation can be framed as the following optimization problem $\min _{\bbq, \bbu}\left\|\bba - \bbq \bbu^{\top}\right\|_F^2$, where $\bbq \in \mathbb{R}^{n \times r}$ and $\bbu \in \mathbb{R}^{m \times r}$. As discussed above, the matrix $\bba_{\mathsf{app},r} = \bbq \bbu^{\top}$ represents the $(\eta,\varepsilon)$ rank-$r$ approximation of $\bba$. Computing full SVD for large matrices is computationally intensive and memory-demanding. To address these challenges, we detail Algorithm \ref{alg::randomized_range_finder} \citep{halko2010findingstructurerandomnessprobabilistic}, an efficient technique for producing a ``good" proxy for the optimal low-rank approximation of $\bba$. Algorithm \ref{alg::randomized_range_finder} is designed to solve the optimization problem
$\arg\min _{\mathbf{Q} \in \mathbb{R}^{n \times r}} \left\|\bba - \mathbf{Q Q}^\top \bba \right\|_F,$
and approximates the matrix $\bba$ as $\bba_{\mathsf{app},r} \approx \mathbf{Q Q}^\top \bba.$
The computational complexity of the SSRF algorithm is dominated by two main operations: the matrix multiplication $\bba \Omega$, whose computational complexity is $O(mnk)$, and the QR decomposition of the resulting matrix $\bby$, requiring $O(mk^2)$. Therefore, the total complexity is $O(mnk + mk^2)$. In scenarios where $k\ll m,n$, this complexity simplifies to $O(mnk)$, which is significantly more efficient than using SVD whose complexity is $O(\min(mn^2, m^2n))$. This low-complexity makes the SSRF preferable for extracting leading singular vectors in large-scale data settings.

 \begin{algorithm}[t]
        \caption{Subspace selection via randomized range finder (SSRF)}
        \label{alg::randomized_range_finder}
        \begin{algorithmic}
            \STATE {\bfseries Inputs:} Matrix $\bba \in \mathbb{R}^{n \times m}$, target rank $r\leq \min \{n, m\}$. 
            \STATE {\bfseries Initialization:} $\Omega \in \mathbb{R}^{m\times r} \sim \textbf{N}(0,1)$
            \STATE Compute $\bby \leftarrow \bba \Omega$
            \STATE Construct $\bbq\in \mathbb{R}^{n\times r}$ using the $\mathrm{QR}$ decomposition of $\bby$
            \STATE {\bfseries Return:} $\mathbf{Q}$
        \end{algorithmic}
    \end{algorithm}

\subsubsection{Fast adaptive low-rank approximation}

It is clear from Definition \ref{def::Approximate_rank_of_matrix} that a larger value of the rank $r$ increases the computational operations and induces a higher memory footprint, while a smaller value of $r$ will result in a non-negligible approximation error. To mitigate this, we suggest Algorithm~\ref{alg:adaptive_srsi}, an adaptive procedure for rank selection that adjusts the value of $r$ to preserve at least a certain fraction of the information in $\bba$. Specifically, at start we set $r_0 = r_{\text{min}}$, and then find the best $\bba_{\mathsf{app},r}\approx\mathbf{Q Q}^\top \bba$ using SSRF. At each time step $t$, given the current rank $r_t$, we compute the approximation error as,
    \begin{algorithm}[hb]
% \begin{algorithm}[htb]
   \caption{Information-based adaptive subspace selection (IASS)}
   \label{alg:adaptive_srsi}
   \footnotesize
 \begin{algorithmic}
   \STATE {\bfseries Inputs:} Matrix $\bba \in \mathbb{R}^{n \times m}$, minimum rank $r_{\text{min}}\geq 1$, maximum threshold rank $r_{\text{max}}\ll m$, and information threshold $\eta_{\text{th}}\in(0,1).$
   \STATE {\bfseries Initializations:} $t\gets0,r_0 \gets r_{\text{min}} , \eta_0\gets\eta_{\text{th}}+1.$
   %\lceil \frac{r_{\text{max}}-r_{\text{min}}}{2}\rceil, \eta_0\gets\eta_{\text{th}}+1.$
    \STATE $\mathbf{Q}\leftarrow\operatorname{SSRF}(\bba, r_\text{max})$\hfill\hfill{\color{gray}\COMMENT{Call Algorithm \ref{alg::randomized_range_finder}}}
   \WHILE{$r\geq1$ and $r_{\text{max}}-r_{t}\geq1$}%and $\eta_t> \eta_{\text{th}}$}
        \STATE $\bba_{\text{app,}r_t}\leftarrow\mathbf{Q}\left[:,: {r}_{t}\right] \mathbf{Q}\left[:,: {r}_{t}\right]^\top \bba$.
       \STATE $\eta_t \leftarrow \left\|\bba - \bba_{\text{app,}r_t}\right\|_F / \|\bba\|_F$ \hfill\hfill\COMMENT{{\color{gray}$\approx\frac{\sum_{i=r_t + 1}^{n} \sigma_{i}(\bbg_t)^2}{\sum_{i=1}^{n} \sigma_{i}(\bbg_t)^2}$}}
        \IF{$\eta_t > \eta_{\text{th}}$}
            \STATE $r_{\text{max}} \gets \lceil\frac{r_{\text{max}} - r_t}{2}\rceil$
        \ELSIF{$\eta_t \leq \eta_{\text{th}}$}
            \STATE $r_t \gets \lfloor\frac{r_{\text{max}} - r_t}{2}\rfloor$
        \ENDIF
   \ENDWHILE
   \STATE {\bfseries Return:} $\mathbf{Q}\left[:,: r_t\right], r_t$.
        \end{algorithmic}
    \end{algorithm}
\begin{align}\label{def::approximate_error}
\eta_t = \frac{\left\|\bba - {\bba}_{\text{app,}{r_t}}\right\|_F^2}{\|\bba\|_F^2}=\frac{\sum_{i=r_t + 1}^{n} \sigma_{i} (\bba)^2}{\sum_{i=1}^{n} \sigma_{i}(\bba)^2}.    
\end{align}
We then apply binary search until we find the maximum $r$ for which the corresponding condition still holds: $\eta<\eta_{\text{th}}$. It is important to note that, from the outset, we know a priori that the rank of the approximated gradients is inherently low. Consequently, the initial value of $r_{\text{max}}\ll \min\{m,n\}$ is small, and the number of iterations required for the search is correspondingly minimal, specifically $O(\log(r_{\text{max}}-r_{\text{min}})),$ obviously $(r_{\text{max}}-r_{\text{min}})\ll \min\{m,n\}$.

\subsection{Adaptive Low-Rank and Moments Gradient Optimization}
We can now present our main algorithm for adaptive low-rank and moments gradient optimization in Algorithm~\ref{alg::AdaRankGrad}. 
To describe AdaRankGrad update step rule, we need to establish a few important notations first. For $t\in\mathbb{N}$, let $\rho_t:\mathbb{R}^{m\times n}\to\mathbb{R}^{m\times n}$ be an entry-wise gradient update rule (e.g., Adam, AdamW, AdaFactor, etc.). For $t\in\mathbb{N}$ and layer $j\in[L]$, recall the SVD of the gradient $\bbg^j_{t}= \bbu^j_{t}\Sigma^j_{t}(\bbv^j_{t})^\top$, and further define the projection matrices $\bbp^j_{t}(r_t^j)=\text{SSRF}(\bbu^j_{t}[:,:r_{t}^j]^\top),\bbr^j_{t}(r_t^j)=\text{SSRF}\left(\bbv^j_{t}[:,:r_{t}^j]\right)$, where for a given threshold $0<\eta_{\text{th}}\leq 1$, we let $r_{t}^j\triangleq\sup\{r\in\mathbb{N}:\eta_\text{th}\cdot\|\bbg^j_t\|_F^2-||\bbg^j_t - \bbp^j_{t}(r) \bbg^j_t||_F^2\geq0\}$, computed using Algorithm~\ref{alg:adaptive_srsi}. Next, for $j\in[L]$, and some $\varsigma>0$, we let $\{\mathsf{T}^j_\ell\}_{\ell\geq0}$, with $\mathsf{T}^j_0=0$, denote the monotone sequence of integers, for which $\|(\bbp^j_{\mathsf{T}_i})^{\top}(r^j_{\mathsf{T}^j_i}) \bbg^j_{\mathsf{T}^j_i}\bbr_{\mathsf{T}^j_i}^j(r^j_{\mathsf{T}^j_i})\|_F\leq\varsigma$; under certain conditions that we list below, it is shown in \cite{cossonLowRankGradientDescent2023}[Thm. 3.4] (as well as in \cite{zhao2024galore}[Theorem 3.8], for reversible layers) that $\mathsf{T}_i$'s are finite. 
% Then, for $i\in\mathbb{N}$, and $\Hat{t}_i\in[\mathsf{T}_i,\mathsf{T}_{i+1}-1]$, the AdaRankGrad weight update is given by,
% % and $\bbp(r_{\mathsf{T}_i}) \in \mathbb{R}^{m \times r_{\mathsf{T}_i}}$ and $\bbp_{t}\gets\textbf{IASS}(\left[\bbu_{\mathsf{T}_i}\right])$ and
% %  $\bbr_t\gets\textbf{IASS}(\left[\bbv_{\mathsf{T}_i}\right]),$ for $\bbg_{\mathsf{T}_i}=\bbu_{\mathsf{T}_i}\Sigma_{\mathsf{T}_i}\bbv_{\mathsf{T}_i}^\top,$ then 
% \begin{align}\label{update_step}
% \bbw_{\Hat{t}_i+\mathsf{T}_i}=\Delta\bbw_{\mathsf{T}_i}+\sum_{\ell=\mathsf{T}_i+1}^{\Hat{t}_i} \eta_t\bbp_{\mathsf{T}_i}(r_{\mathsf{T}_i}) \rho_t\left(\bbp_{\mathsf{T}_i}(r_{\mathsf{T}_i})^{\top} \bbg_{\ell} \bbr_{\mathsf{T}_i}(r_{\mathsf{T}_i})\right) \bbr_{\mathsf{T}_i}(r_{\mathsf{T}_i})^{\top},
% \end{align}
Then, for $i\in\mathbb{N}$, and $t\in[\mathsf{T}_i+1,\mathsf{T}_{i+1}]$, the AdaRankGrad weight update is given by,%
% \begin{align}\label{update_step}
$$\bbw_{t}^j=\bbw_{\mathsf{T}_i}^j+\sum_{\ell=\mathsf{T}_i+1}^{t} \eta_t\bbp^j_{\mathsf{T}_i}(r^j_{\mathsf{T}_i}) \rho_\ell\left((\bbp^j_{\mathsf{T}_i}(r_{\mathsf{T}_i}))^{\top} \bbg^j_{\ell} \bbr^j_{\mathsf{T}^j_i}(r^j_{\mathsf{T}^j_i})\right) (\bbr^j_{\mathsf{T}_i}(r^j_{\mathsf{T}^j_i}))^{\top},$$%\nonumber
% \end{align}
% \end{align}
for $j\in[L]$, where $\eta_t$ is the learning rate at time $t$ adapted by $\rho_t$, and $\bbw_{\mathsf{T}_0}=\bbw_0$ is a pre-trained model, in a case of fine-tuning task, or a given weights initialization, in case of pre-training task. In the sequel, we let $\eta\triangleq\eta_0$ denote the initial learning rate, and for simplicity of notation, when clear from the context, we drop the dependency of the various notations on the layer index $j$.

% The mathematical formulation of the weights update rule proposed in this paper is detailed in Appendix~\ref{app::update_step}. 

Our algorithm comprises four main blocks, all contained within an outer loop that terminates once we reach convergence. The role of each block is as follows.
\begin{itemize}[leftmargin=*]
    \item \textbf{Block 1}: We select the (approximated) subspace along the directions of the $r$ largest eigenvectors, using Algorithm \ref{alg:adaptive_srsi}. The number of orthogonal directions $r$ is determined by the information threshold required to preserve the gradient information, according to \eqref{def::approximate_error}.
    \item \textbf{Block 2}: We transform the first and second gradient moments evaluated during the Adam update steps between the previous and the updated subspace. The main reason for this transformation is because, as will be seen below, in the fourth block, the first and second moments of the gradients are aligned with the previous projected subspace, and thus, a transformation is needed to convert them from the previous subspace to the current one.
    \item \textbf{Block 3}: We tune the parameters during the low-rank update step. The stopping condition triggering the update of the orthogonal projection matrix is based on the convergence of the projected gradient onto the subspace.
    \item \textbf{Block 4}: The actual pre-trained model parameters are updated, using the low-dimensional and memory-efficient projected gradients on the selected subspace.
\end{itemize}
% In the following, we analyze the convergence of Algorithm \ref{alg::AdaRankGrad} theoretically. %We prove convergence for the whole algorithm, namely of the two nested loops, while convergence for example Galore is proved only for the inner loop. 
% As in Lemma~\ref{Gradient's rank gradually diminish} we assume that the network is reversible. To present the following result, we recall that for reversible networks, it was shown in \citep{zhao2024galore}[Theorem 3.2] that the gradients have the form $\bbg_t=\frac{1}{N} \sum_{i=1}^N\left(\bba_i-\bbb_i \bbw_t \bbc_i\right)$, with constant matrices $\{\bba_i\}_i$, and PSD matrices $\{\bbb_i,\bbc_i\}_i$, for $t \geq \mathsf{t}_0$, where $\mathsf{t}_0\in\mathbb{N}$. Furthermore, recall that $\bbs\triangleq\frac{1}{N} \sum_{i=1}^N \bbc_i \otimes \bbb_i$, and we denote by $\{\lambda_i\}_i$ its corresponding eigenvalues.\Yehonathan{Wasim, it is all written in section 2 and is not needed here}\wasim{Yes, this is why I wrote ``recall". The point is that since I thought that we are going to state some assumptions from Theorem 3.8, I wanted to remind the reader about these notations. We talked and you told me that we actually do not need to assume these assumptions. If this would be indeed that case, then I completely agree and we can remove this ``recall" thing.}
% Most notably, we do not assume that the neural network is reversible as in related literature, and thus we capture other architectures as well.
\begin{algorithm}[t]
\footnotesize
\caption{Adaptive low-rank and moments gradient (AdaRankGrad)}
\label{alg::AdaRankGrad}
\begin{algorithmic}
\STATE \textbf{Input:} A Layer $\bbw\in\mathbb{R}^{n \times m}$ from $\boldsymbol{\theta}$, dataset $\mathcal{D}$, loss function $\mathcal{L}$, scale factor $\alpha$, information thershold $0<\eta_{\text{th}}<1,$ initial rank $r_{\text{init}}$, maximal rank $r_{\text{max}},$ initial learning rate $\alpha$, and small numbers $\varsigma_1,\varsigma_2>0$.
% \STATE \textbf{Initialize:} first-order moment $\bbm_0 \in \mathbb{R}^{n \times r} \leftarrow 0,$ second-order moment $V_0 \in \mathbb{R}^{n \times r} \leftarrow 0,$ step $t \leftarrow 0.$
\STATE \textbf{Initialization:} $t=0.$
    \STATE Sample batch $B \longleftarrow\{x_i, y_i\}_{i=1}^{|B|}{\color{gray}\sim\mathcal{D}}$
    \STATE Compute batch gradient $\bbg_t \leftarrow \sum_{i=1}^{|B|}\frac{\partial}{\partial \bbw} \mathcal{L}\left(\Phi(x_i, \boldsymbol{\theta}),y_i\right)$\hfill{\color{gray}\COMMENT{For unsupervised $ \mathcal{L}\left(\Phi(x_i, \boldsymbol{\theta})\right)$}}
\WHILE{$\|\bbg_t\|_F>\varsigma_1$} 
    % \STATE $t\leftarrow 0$
    \STATE {\color{blue}\hrulefill}
    \STATE \textbf{\color{blue}Block 1: Adaptive subspace selection}
    \STATE $\bbq_t, r_t \leftarrow$ IASS$\left(\bbg_t,r_{\text{init}},r_{\text{max}},\eta_{\text{th}}\right)$ \hfill{\color{gray}\COMMENT{Approximated $r_t$-dimension subspace: $Q_t^{r_t\times n}$ projected matrix}}
    {\color{blue}\\\hrulefill}
    \STATE \textbf{\color{blue}Block 2: Moments subspaces transformation}
    % \STATE $\bbr_t^{r_t \times m} \leftarrow 0 \text{ if } t=0, \bbq_1^\top \text{ if } t=1, \text{ else } \bbq_t^\top\bbq_{t-1}$
    \STATE $\bbr_t^{r_t \times r_{t-1}} \leftarrow \bbq_t^\top\bbq_{t-1} \text{ if } t\geq 1, \text{ else } \mathbf{0}^{r_t \times r_{t-1}}$
    \STATE $\bbm_t^{r_t \times m} \leftarrow \bbr_t \bbm_{t-1},\text{ if } t\geq 1, \text{ else } \mathbf{0}^{r_t \times m}$\hfill{\color{gray}\COMMENT{$1^{st}$-order moment}}
    \STATE $\bbv_t^{r_t \times m} \leftarrow \bbr_t\bbv_{t-1},\text{
 if } t\geq 1, \text{ else } \mathbf{0}^{r_t \times m}$\hfill {\color{gray}\COMMENT{$2^{st}$-order moment}}
    {\color{blue}\\\hrulefill}  
    \STATE\textbf{\color{blue}{Block 3: Low-rank optimization}}
    \STATE $\hat{\bbg}_t\leftarrow\bbq_t^\top\bbg_t$ \hfill{\color{gray}\COMMENT{Projected gradient on the approximated $r_t$-dimension subspace}}\WHILE{$\|\hat{\bbg}_t\|_F>\varsigma_2$}
        % \STATE \bold{Adam updtae }
        % \STATE Sample batch $B \longleftarrow\{x, y\}{\color{gray}\sim\mathcal{D}}$
        % \STATE Compute batch gradient $\bbg_t \leftarrow \sum_{i=1}^{|B|}\frac{\partial}{\partial \bbw} \mathcal{L}\left(\phi(x, \boldsymbol{\theta})\right)$ 
        % \STATE $\hat{\bbg}_t\leftarrow\bbq_t^\top\bbg_t$
        \STATE
        % $\hat{\bbg}_t\leftarrow\bbq_t^\top\bbg_t$ \hfill{\color{gray}\COMMENT{Projected gradient on the approximated $r_t$-dimension subspace}}
        {\color{blue}\hrulefill}
        \STATE\textbf{\color{blue}Block 4: Adam update step} 
        \STATE $\bbm_t  \longleftarrow \beta_1 \bbm_t + \left(1 - \beta_1\right) \hat{\bbg}_t$
        \STATE $ \bbv_t  \longleftarrow \beta_2 \bbv_t + \left(1 - \beta_2\right) \hat{\bbg}_t^2$
        \STATE $\hat{\bbm}_t  \longleftarrow \bbm_t / \left(1 - \beta_1^t\right)$
        \STATE $\hat{\bbv}_t \longleftarrow \bbv_t / \left(1 - \beta_2^t\right)$
        \STATE $\bbw_t  \longleftarrow \bbw_t - \alpha \bbq_t\hat{\bbm}_t / \left(\sqrt{\hat{\bbv}_t} + \epsilon\right)$
        {\color{blue}\\\hrulefill}
        \STATE $t\gets t+1$
        \STATE Sample batch $B \longleftarrow\{x_i, y_i\}_{i=1}^{|B|}{\color{gray}\sim\mathcal{D}}$
        \STATE Compute batch gradient $\bbg_t \leftarrow \sum_{i=1}^{|B|}\frac{\partial}{\partial \bbw} \mathcal{L}\left(\Phi(x_i, \boldsymbol{\theta}),y_i\right)$
        \STATE $\hat{\bbg}_t\leftarrow\bbq_t^\top\bbg_t$
    \ENDWHILE 
    {\color{blue}\\\hrulefill}
\ENDWHILE \hfill {\color{gray}\COMMENT{Exit by convergence criteria could alternatively be defined by the number of epochs}}
\STATE \textbf{Return} $\bbw_t$
\end{algorithmic}
\end{algorithm}

\begin{theorem}[Convergence of Algorithm \ref{alg::AdaRankGrad}]\label{thm:2}
For a loss function $\mathcal{L},$ and given architecture $\Phi$, suppose that the compositions of $f\equiv\mathcal{L}\left(\Phi(\cdot)\right)$ is $\beta$-smooth non-convex function that are bounded by some $M\in\mathbb{R}_+$. Let $\bbg_t^j$ denote the gradient matrix w.r.t. the $j$th reversible layer at time $t\in\mathbb{N}$. Assume that $\|\bbg_t^j\|_F\leq D$, for all $j\in[L]$ and $t\in\mathbb{N}$, where $D\in\mathbb{R}_+$. Then, for any $\varepsilon> 0$, there exists $\mathsf{C}\in\mathbb{R}_+$ such that for all $\mathsf{T}_N>\frac{\mathsf{C}}{\varepsilon^2}$, it holds that $\frac{1}{\mathsf{T}_NL}\sum_{j=1}^{L}\sum_{i=0}^{N-1}\sum_{t=\mathsf{T}_{i}}^{\mathsf{T}_{i+1}-1}\left\|\bbg_t^j\right\|_F^2 \leq \varepsilon
$. In particular, Algorithm \ref{alg::AdaRankGrad}, with vanilla SGD weight update\footnote{We focus on SGD for the simplicity (as is standard practice in related literature, e.g., \citep{zhao2024galore}).}, $\varsigma_2\triangleq\varsigma_{2,i}=\sqrt{1-\eta_{\text{th}}}\cdot\left\|\bbg_{\mathsf{T}_{i-1}}\right\|_F^2$, where $i$ is the Block $3$ entry counter, and learning rate $\alpha<2/\lambda_{\max}$, achieves an $\varepsilon$-critical point,\footnote{Also known as $\varepsilon$-stationary, see, e.g., \citep{cossonLowRankGradientDescent2023}.} i.e., $\left\|\bbg_t^j\right\|_F^2\leq\varepsilon$, for some $t\in\mathbb{N}$, and any $j\in[L]$.
\end{theorem}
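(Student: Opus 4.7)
The plan is to follow the classical non-convex SGD convergence framework (\`a la Ghadimi--Lan), specialized to the low-rank projected iterations of Algorithm~\ref{alg::AdaRankGrad}. The argument decomposes into a per-iteration descent lemma, a telescoping over the Block~3 segments delimited by the refresh times $\{\mathsf{T}_i\}$, and a bridging step that converts bounds on projected gradients into bounds on the full gradients. This last step is the main technical obstacle.

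\textbf{Descent and telescoping.} Within the $i$-th Block~3 segment the subspace $P_{\mathsf{T}_i}^j := Q_{\mathsf{T}_i}^j (Q_{\mathsf{T}_i}^j)^\top$ is held fixed, and the vanilla SGD update reduces to $w_{t+1}^j = w_t^j - \alpha P_{\mathsf{T}_i}^j g_t^j$. Applying $\beta$-smoothness of $f = \mathcal{L}(\Phi(\cdot))$ together with $\langle g_t^j, P_{\mathsf{T}_i}^j g_t^j\rangle = \|P_{\mathsf{T}_i}^j g_t^j\|_F^2$ (by idempotence and self-adjointness of an orthogonal projection) yields
\begin{equation*}
f(w_{t+1}) - f(w_t) \;\leq\; -\alpha\!\left(1-\tfrac{\alpha\beta}{2}\right)\sum_{j=1}^L \bigl\|P_{\mathsf{T}_i}^j g_t^j\bigr\|_F^2 .
\end{equation*}
The hypothesis $\alpha < 2/\lambda_{\max}$, with $\lambda_{\max}$ identified as the smoothness constant $\beta$, guarantees that $c := \alpha(1 - \alpha\beta/2) > 0$. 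Summing over $t \in [\mathsf{T}_i, \mathsf{T}_{i+1})$, then over $i = 0,\dots,N-1$, and using the uniform bound $|f| \leq M$ to telescope gives the aggregate inequality $\sum_{j,i,t}\|P_{\mathsf{T}_i}^j g_t^j\|_F^2 \leq 2M/c$.

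\textbf{Bridging to full gradients.} By the IASS construction (Algorithm~\ref{alg:adaptive_srsi}), at every refresh time $\mathsf{T}_i$ the subspace satisfies $\|P_{\mathsf{T}_i}^j g_{\mathsf{T}_i}^j\|_F^2 \geq (1 - \eta_{\text{th}})\|g_{\mathsf{T}_i}^j\|_F^2$. The difficulty is to propagate this ``subspace capture'' property to all intermediate iterates $t \in (\mathsf{T}_i, \mathsf{T}_{i+1})$, where $P_{\mathsf{T}_i}$ is stale. My plan is to combine the Block~3 exit criterion $\|(Q_{\mathsf{T}_i}^j)^\top g_t^j\|_F > \varsigma_{2,i+1} = \sqrt{1-\eta_{\text{th}}}\,\|g_{\mathsf{T}_i}^j\|_F^2$ with $\beta$-smoothness of $\nabla f$ and the bound $\|g_t^j\|_F \leq D$ to control the drift of $(I - P_{\mathsf{T}_i}^j) g_t^j$ by an $O(\alpha D)$ perturbation of its entry-time value; an alternative, cleaner route---available in the reversible setting---is to invoke Lemma~\ref{Gradient's rank gradually diminish}, which ensures exponential alignment of $g_t$ with a rank-one subspace so that any IASS-chosen subspace absorbs all but an $O(C^{-t})$ fraction of the energy. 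Either route delivers a uniform bound of the form $\|P_{\mathsf{T}_i}^j g_t^j\|_F^2 \geq (1 - \eta_{\text{th}}')\|g_t^j\|_F^2$ for some $\eta_{\text{th}}' < 1$.

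\textbf{Conclusion and main obstacle.} Substituting the bridging inequality into the telescoped bound and dividing by $L\mathsf{T}_N$ produces the averaged bound in the statement, with $\mathsf{C}$ collecting the constants $2M,\,c,\,(1-\eta_{\text{th}}')^{-1}$; the $\varepsilon^{-2}$ dependence reflects the standard mini-batch-variance contribution that arises once $g_t$ is treated as a stochastic gradient (the deterministic part alone would give $\varepsilon^{-1}$). A pigeonhole over $(t,j)$ then yields the claimed $\varepsilon$-critical point. The principal obstacle is precisely the bridging step under a stale projection: without either the exit-criterion argument or Lemma~\ref{Gradient's rank gradually diminish}, the ratio $\|P_{\mathsf{T}_i}^j g_t^j\|_F^2/\|g_t^j\|_F^2$ cannot be controlled uniformly in $t$. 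Making the exit-criterion argument quantitative---in particular bounding $\mathsf{T}_{i+1}-\mathsf{T}_i$ by a quantity independent of $\varepsilon$, so that staleness is always short---is where the bulk of the technical effort will be concentrated.
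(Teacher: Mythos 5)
Your high-level skeleton (descent lemma on the projected update $\to$ telescope over Block~3 segments $\to$ bridge projected-gradient norms back to full-gradient norms) matches the paper's, and you correctly flag the bridging step as the crux. But the bridging step in your proposal is a gap, and the ingredient the paper actually uses is missing. The paper does \emph{not} bound the drift of $(I-P_{\mathsf{T}_i})g_t$ via $\beta$-smoothness, nor does it invoke Lemma~\ref{Gradient's rank gradually diminish} in this proof. Instead it uses the reversible-layer structural identity $\bbg_t = (I-\alpha\bbs)\bbg_{t-1}$ (with $\bbs=\tfrac{1}{N}\sum_i \bbc_i\otimes\bbb_i$ from the GaLore gradient form) to conclude that, whenever $\alpha\le 2/\lambda_{\max}(\bbs)$, the map $I-\alpha\bbs$ is a contraction in operator norm, hence $\|\bbg_t\|_F$ is \emph{monotone non-increasing} in $t$. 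This monotonicity is what lets one write $\|\bbg_t\|_F^2\le\|\bbg_{\mathsf{T}_i}\|_F^2$ for every $t$ inside the $i$-th segment, and then bound $\|\bbg_{\mathsf{T}_i}\|_F^2 \le (1-\eta_{\text{th}})^{-1}\|\bbp_{\mathsf{T}_i}\bbg_{\mathsf{T}_i}\|_F^2$ via the IASS threshold, without ever needing the stale projection to capture a constant fraction of $g_t$ at intermediate $t$. Your routes would at best control staleness over a short horizon; they do not yield the uniform-in-$t$ inequality you need, and you yourself note you would need $\mathsf{T}_{i+1}-\mathsf{T}_i$ bounded, which the paper never establishes nor requires.

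Two further inaccuracies. First, you identify $\lambda_{\max}$ in the learning-rate condition $\alpha<2/\lambda_{\max}$ with the smoothness constant $\beta$; in the paper $\lambda_{\max}$ is the largest eigenvalue of $\bbs$, a quantity specific to the reversible-layer gradient dynamics, not the Lipschitz constant of $\nabla f$. Your reading makes the condition into the usual $\alpha<2/\beta$ descent condition, which is a different (and weaker) ingredient that the paper also uses implicitly but does not denote by $\lambda_{\max}$. Second, the $\varepsilon^{-2}$ threshold on $\mathsf{T}_N$ is not a stochastic-variance phenomenon: the paper's argument is entirely deterministic, and the $1/\sqrt{\mathsf{T}_N}$ rate (hence $\mathsf{T}_N\gtrsim\varepsilon^{-2}$) arises from the standard Ghadimi--Lan step-size tuning $\alpha\propto 1/\sqrt{\mathsf{T}_N}$ balancing the telescoped descent term against the $\tfrac{\beta D^2\alpha}{2}$ per-step remainder.
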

% \wasim{Since we assume reversibility, and because we use their Theorem 3.8, unfortunately, I do not see any other way around stating their assumptions as well, i.e., $\min_t\kappa_t>L_A+lba\;bla$, which includes defining this $\kappa, L_A, \hat{B}_{it}$, etc.}
The proof of Theorem~\ref{thm:2} can be found in Section~\ref{sec:proofs}. A few important comments are in order. First, to reduce memory usage, we apply in Algorithm \ref{alg::AdaRankGrad} a per-layer weight update during backpropagation, as proposed by recent works, see, e.g., \cite{lv2024adalomolo}. This is in contrast to common optimizers which usually update all weights after backpropagation by storing the full gradients in memory, which could be highly inefficient. Second, note that the Adam update step block in Algorithm \ref{alg::AdaRankGrad} can be replaced by any quantized Adam variant, e.g., \cite{NIPS2017_1c303b0e,chen2021quantized,seok2021quantized}, and as so allows to obtain tasked fine-tuned quantized model or quantized adaptor; this is discussed in more detail below. Finally, we would like to mention here that one can easily apply 4-bit projected gradient updates, as introduced in Q-GaLore \citep{zhang2024qgalore}.

If, after fine-tuning, one wishes to create an adapter (i.e., a parallel low-dimensional LoRA type model) alongside the original model, this can be done efficiently as follows. First, we calculate the training weights gap $\Delta\triangleq\bbw_\text{Fine-Tuned}-\bbw_\text{Pretrained}$, where $\bbw_\text{Fine-Tuned}$ is the model weight at the end of the process, and $\bbw_\text{Pretrained}$ is the original model weight. Then, we find the $r_\text{Adaptor} \triangleq \mathsf{rank}(\Delta)$, using some matrix ranking algorithm, and finally, we solve 
        % $\bba,\bbb\argmin_{\bba\in\mathbb{R}^{n \times r_\text{Final}},\bbb\in\mathbb{R}^{ r_\text{Final}\times m}}\left\|\delta-\bba\bbb\right\|_F_2$ using any optimization algorithm (such gradient decent). Note that any solution of the latter matrix factorization optimization is proved to be global optimum\cite{NIPS2016_f2fc9902}. 
$\min_{\bba\in\mathbb{R}^{n \times r_\text{Adaptor}}, \bbb\in\mathbb{R}^{r_{\text{Adaptor}} \times m}} \left\|\Delta - \bba \bbb \right\|_F^2$, using any optimization algorithm (e.g., gradient descent). We note that any solution to this matrix factorization optimization problem is well-known as a global optimum \citep{NIPS2016_f2fc9902}.

\section{Experiments}\label{sec::experiments}
% We evaluate AdaRankGrad on both pre-training and fine-tuning of
% LLMs. All experiments run on NVIDIA A100 GPUs.
% \subsection{Finetuning on GLUE benchmark}
% \subsection{Finetuning Experiments}
In this section, we test the performance of our algorithm on real-world datasets. Before discussing the setup we rely on in our experiments, we define four measures for memory usage reduction in the rank-adaptive projection matrices. Specifically, %recall that $\bbw_j$ is the $j$th layer of a given LLM. 
for the $j$th layer, we define \emph{the effective layer-gradient-rank}, by,
%$r^j_\text{adap}$ specified by applying Algorithm \ref{alg::AdaRankGrad} is defined by 
$\mathcal{R}^j_\text{adap}\triangleq\frac{\sum_{t=0}^{\mathsf{T}-1}\mathsf{R}^j_t}{\mathsf{T}}$, where $\mathsf{R}_t^j$ is the rank of the $j$th layer projection matrix at time $0\leq t\leq \mathsf{T}-1$, for some $\mathsf{T}\in\mathbb{N}$, and accordingly the \emph{total weighted-average effective rank} is defined as $\mathcal{R}_\text{adap}\triangleq \frac{\sum_{j=1}^{L}\sum_{t=0}^{\mathsf{T}-1}\mathsf{R}^j_t(d_j+d_{j+1})}{\mathsf{T}\cdot\sum_{j=0}^{L}(d_j\cdot d_{j+1})}.$ Following that, we define the average per-layer reduction in memory footprint, when compared to the non-adaptive low-rank fine-tuning, by  $\mathcal{M}_{\text{red}}^j\triangleq(\bar{\mathsf{R}}^j - \mathcal{R}^j_\text{adap})\cdot(d_j+d_{j+1}),$ where $\bar{\mathsf{R}}^j$ is the time-independent non-adaptive rank (such as in Galore). Finally, we define the total memory reduction by $\mathcal{M}_{\text{red}}\triangleq\sum_{j=1}^{L}(\bar{\mathsf{R}}^j - \mathcal{R}^j_\text{adap})\cdot(d_j+d_{j+1}).$

\subsection{Fine-tuning on GLUE benchmark}
We evaluate our model on the GLUE benchmark \citep{wang2019superglue} by fine-tuning the pre-trained Roberta-base model \cite{liu2019roberta} on 8 tasks. We compare the results against full fine-tuning, LoRA, and GaLore methods and present them in Table \ref{tab:comparison}. We report the overall (matched and mismatched) accuracy for MNLI, Matthew’s correlation for CoLA, Pearson correlation for STS-B, F1-score for MRPC, and accuracy for other tasks.
As can be seen, our method improves the accuracy of fine-tuning while consuming less training memory on average. In addition, in Figure \ref{fig:rankvsmemory2}, we present the average memory reduction measured at the end of each epoch (blue) and the corresponding effective rank (green). Finally, empirical analysis of the hyperparameters is provided in Figures~\ref{fig:rank_eta}.
\begin{table}[hbt]
\centering
\caption{Evaluating AdaRankGrad comparing to state-of-the-art memory-efficient fine-tuning methods on GLUE benchmark using pre-trained RoBERTa-Base. For AdaRankGrad, we present accuracy results with average effective rank. }
\resizebox{\linewidth}{!}{
\begin{tabular}{lccccccccc}
\hline
\textbf{Model}  & \textbf{Memory} & \textbf{CoLA} & \textbf{STS-B} & \textbf{MRPC} & \textbf{RTE} & \textbf{SST2} & \textbf{MNLI} & \textbf{QNLI} & \textbf{QQP} \\
\hline
Full Fine-Tuning  & 747M & 62.24 & 90.92 & 91.30 & 79.42 & 94.57 & 87.18 & 92.33 & 92.28 \\ \hline
LoRA (rank=4) & 257M & 61.38 & 90.57 & 91.07 & 78.70 & 92.89 & 86.82 & 92.18 & \textbf{91.29} \\
GaLore (rank=4) & 253M & 60.35 & 90.73  & 92.25 & 79.42 & 94.0  & 87.0 & 92.24 & 91.06 \\ 
AdaRankGrad (Initial rank=4) & \textbf{202M} & \textbf{61.4}(3.71) & \textbf{90.97}(3.79) & \textbf{92.6}(3.72) & \textbf{81.23}(3.65) & \textbf{94.8}(3.91)  & 86.6(3.69) & \textbf{92.5}(3.9) & 90.4(3.67) \\ \hline
LoRA (rank=8) & 264M & 61.83 & 90.80 & 91.90 & 79.06 & 93.46 & 86.94 & 92.25 & \textbf{91.22} \\
GaLore (rank=8) & 257M & 60.06 & 90.82 & 92.0 & 79.78 & 94.38 & 87.17 & 92.2   & 91.11  \\
AdaRankGrad (Initial rank=8) & \textbf{237M} & \textbf{62.0}(6.41)  & \textbf{90.89}(7.77) & \textbf{92.5}(5.33) & \textbf{81.23}(6.64)  & \textbf{94.80}(5.69) & 86.5(6.00)  & \textbf{92.6}(6.54) & 89.7(6.00) \\ 
\hline
\end{tabular}}
\label{tab:comparison}
% \vskip -0.1 in
\end{table}

\begin{figure}[htb]
\centering
\includegraphics[width=.4\linewidth]{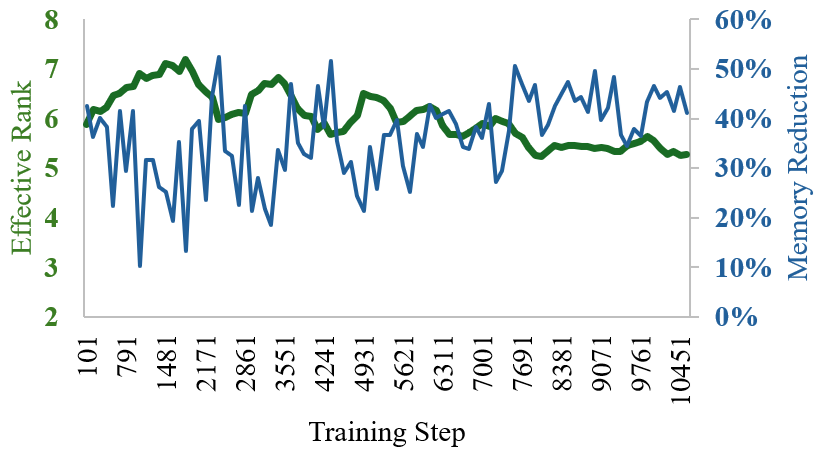}
\includegraphics[width=.4\linewidth]{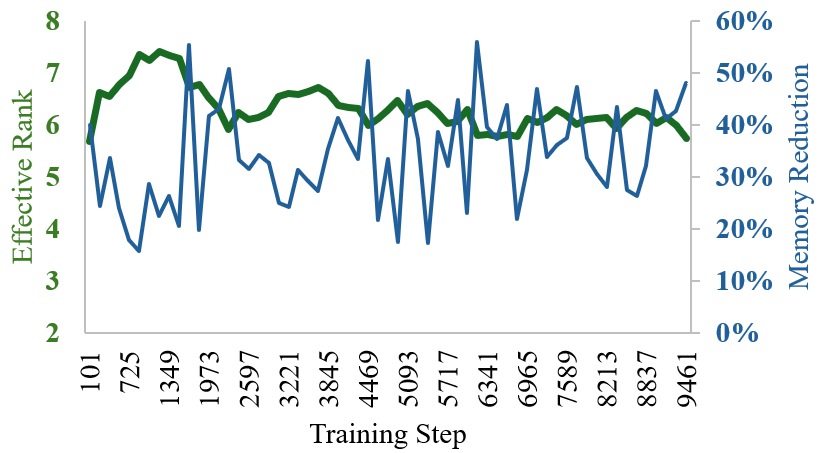}
%(a) \hspace{2.5in} (b)
% \includegraphics[width=.49\linewidth]{}
\caption{We present the effective rank measured for non-attention layers and corresponding memory reduction for AdaRankGrad trained on MRPC (left panel) and RTE (right panel) datasets from the GLUE benchmark.}
\label{fig:rankvsmemory2}
\end{figure}
\begin{figure}[H]
\centering
\includegraphics[width=.45\linewidth]{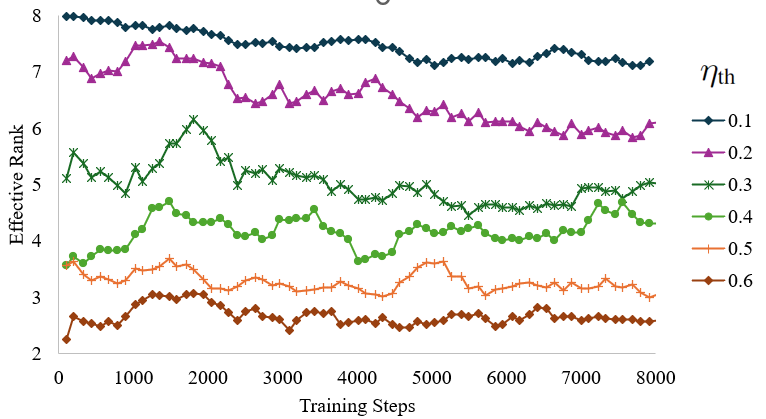}
\includegraphics[width=.45\linewidth]{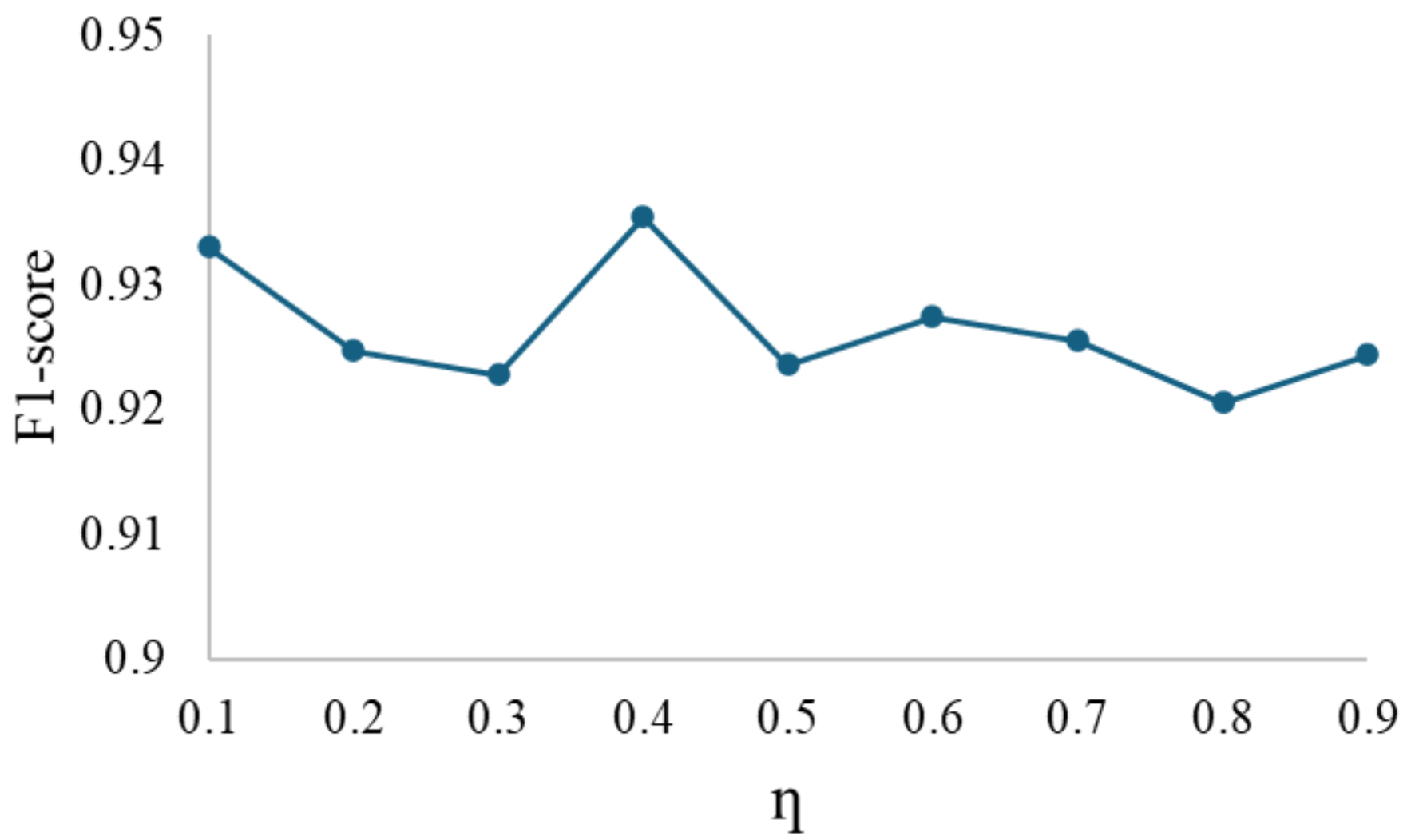}
%(a) \hspace{2.5in} (b)
% \includegraphics[width=.49\linewidth]{effective_rank.png}
\caption{The left graph presents the effective rank measured for different values of $\eta_{th}$ while training AdaRankGrad on MRPC dataset. The right graph present the effective rank measured for different values of $\eta_{th}$ while training AdaRankGrad on MRPC dataset.}
\label{fig:rank_eta}
\end{figure}
\paragraph{Fine-tuning Geneformer on Biological Omics Tabular Data}
High-throughput omics technologies, such as next-generation sequencing \cite{reis2009next}, allow for the simultaneous measurement of thousands to millions of biological molecules, capturing biological processes at the tissue or single-cell level. Recent studies have aimed to construct foundation models for omics data \cite{cui2024scgpt,theodoris2023transfer}. Unfortunately, the complexity of omics data makes the use of low-rank optimization methods such as LoRa sub-optimal for foundation omics models, and therefore, training is often based on relatively high ranks \citep{chen2024quantized}. In this experiment, we demonstrate that the proposed approach can alleviate this problem due to the adaptive low-rank projection of gradients.
We conducted a fine-tuning experiment for cell classification, specifically focusing on the Classification of Cardiomyopathy Disease States. We used the \emph{Geneformer gf-12L-30M-i2048}
\footnote{A transformer-based model trained on ~30 million single-cell transcriptomes for gene classification and in silico perturbation analysis. It has 12 layers with an input size of 2048 genes per cell, designed to understand gene network dynamics. Huggingface: \url{https://huggingface.co/ctheodoris/Geneformer}} 
as our pre-trained base model, which is based on the BERT model and pre-trained on 30 million single-cell transcriptomes \citep{theodoris2023transfer, devlin2018bert}. Following the setup proposed in \citep{chen2024quantized}, we fine-tuned our method with a maximal rank of 16 on approximately 93,600 samples for three epochs, with a batch size of 16. We evaluated the model on a test set of size 17,228. Figure \ref{fig:geneformer} presents the Macro-F1 score and Accuracy measured for each epoch. As shown, our method converges faster with a stable improvement over LoRA.
\begin{figure}[htb]
\vskip -0.1 in
\centering
\includegraphics[width=.38\linewidth]{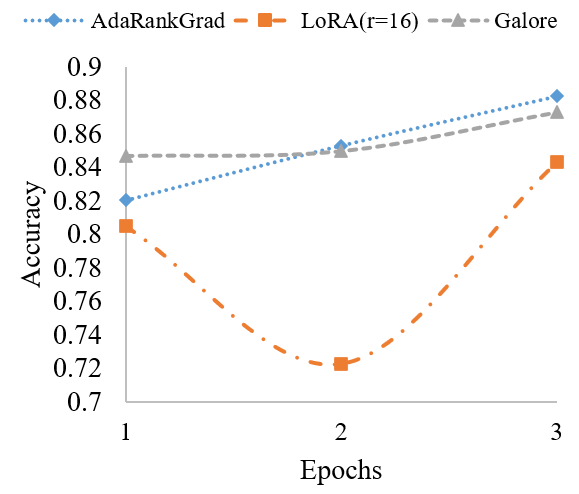}
\includegraphics[width=.38\linewidth]{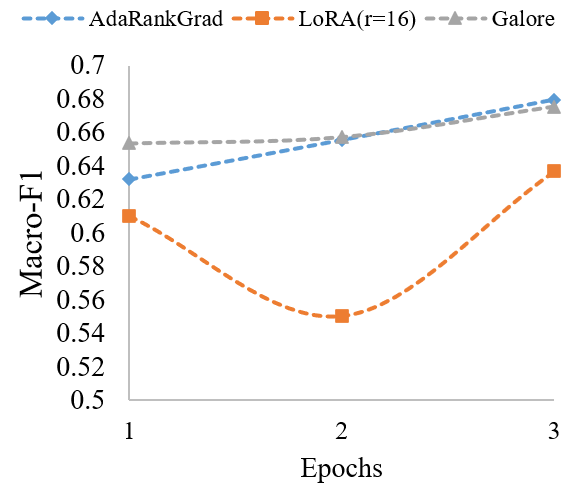}\\
\vskip -0.1 in
\caption{We evaluate AdaRankGrad in the Geneformer fine-tuning task and present Accuracy (left panel) and Macro-1 (right panel) measurements. The adaptive low-rank projections improve the model converges compared to LoRA and Galore methods.}
\vskip -0.1 in
\label{fig:geneformer}
\end{figure}

\subsection{Pre-training LLAMA on C4 Dataset} Here, we repeated the comparison presented in \cite{zhao2024galore} [Table 2] to evaluate AdaRankGrad performance to the state-of-the-art method, in terms of perplexity and memory usage. We evaluate AdaRankGrad by training large LLaMA-based models on the C4 dataset, a cleaned and massive version of Common Crawl's web corpus \citep{raffel2020exploring}. This dataset is primarily used for pre-training language models and learning word representations. To closely mimic practical pre-training scenarios, we train on a large dataset without repeating data, scaling model sizes up to $350$ million. The results are presented in Table~\ref{ex::pretraining}.
\begin{table}[H]
\centering
\caption{The LLaMA 7B model was pre-trained on the C4 dataset for 120K steps. Validation perplexity and memory usage estimates reported.}
\begin{tabular}{l|c|c|c}
\hline \textbf{Steps}/\textbf{Tokens} & \textbf{8-bit GaLore} & \textbf{8-bit Adam} & \textbf{8-bit AdaRankGrad} \\
% \hline b & 18 \mathrm{G} & 26 \mathrm{G} & - \\
\hline 40\text{K }/ 5.2\text{B} & 17.94 & 18.09 & \bf{17.86}\\
80\text{K }/10.5\text{B} & 15.39 & 15.47 & \bf{15.27} \\
120\text{K }/15.7\text{B} & 14.95 & \bf{14.83} & 14.87 \\
\hline
\textbf{Mem} & 18\text{G} & 26\text{G} & \bf{16.4}\text{G} \\
\hline
\end{tabular}
\end{table}

\begin{table}[H]
\caption{A comparison of low-rank state-of-the-art algorithms for pre-training LLaMA models of varying sizes on the C4 dataset. We use initial rank $r_{\text{init}}=r$ for AdaRankGrad, w.r.t the $r$ presented in the last table raw. The information threshold $\eta_{\text{th}}=0.48$ for the three model training. The validation perplexity is presented, along with an estimate of the memory required for the total number of parameters and optimizer states in BF16 format. We used NVIDIA A100 for the three experiments.}
% \centering\begin{tabular}{lccc}
% \hline & $\mathbf{6 0 M}$ & $\mathbf{1 3 0 M}$ & $\mathbf{3 5 0 M}$ \\
% \hline Full-Rank & $34.06(0.36 \mathrm{G})$ & $25.08(0.76 \mathrm{G})$ & $18.80(2.06 \mathrm{G})$ \\
% \hline GaLore & $3 4 . 8 8(0.24 \mathrm{G})$ & $2 5 . 3 6(0.52 \mathrm{G})$ & $1 8 . 9 5(1.22 \mathrm{G})$ \\
% Low-Rank & $78.18(0.26 \mathrm{G})$ & $45.51(0.54 \mathrm{G})$ & $37.41(1.08 \mathrm{G})$ \\
% LoRA & $34.99(0.36 \mathrm{G})$ & $33.92(0.80 \mathrm{G})$ & $25.58(1.76 \mathrm{G})$ \\
% ReLoRA & $37.04(0.36 \mathrm{G})$ & $29.37(0.80 \mathrm{G})$ & $29.08(1.76 \mathrm{G})$ \\
% \textbf{AdaRankGrad} & $\mathbf{34.24}(0.206 \mathrm{G})$ & $\mathbf{25.22}(0.497 \mathrm{G})$ & $\mathbf{18.91}(1.106 \mathrm{G})$ \\
% \hline
% Training Tokens & 1.1 B & 2.2 B & 6.4 B \\
% $r / d_{\text{model}}$ & $128 / 256$ & $256 / 768$ & $256 / 1024$ \\
% \hline
% \end{tabular}
% \label{ex::pretraining}
% \end{table}
\centering
\begin{tabular}{lcccc}
\hline 
& $\mathbf{60M}$ & $\mathbf{130M}$ & $\mathbf{350M}$ & $\mathbf{1B}$ \\
\hline 
Full-Rank & $34.06(0.36 \mathrm{G})$ & $25.08(0.76 \mathrm{G})$ & $18.80(2.06 \mathrm{G})$ & $15.56 (7.80\mathrm{G})$\\
GaLore & $34.88(0.24 \mathrm{G})$ & $25.36(0.52 \mathrm{G})$ & $18.95(1.22 \mathrm{G})$ & $15.64 (4.38\mathrm{G})$\\
Low-Rank & $78.18(0.26 \mathrm{G})$ & $45.51(0.54 \mathrm{G})$ & $37.41(1.08 \mathrm{G})$ & $142.53 (3.57\mathrm{G})$\\
LoRA & $34.99(0.36 \mathrm{G})$ & $33.92(0.80 \mathrm{G})$ & $25.58(1.76 \mathrm{G})$ & $19.21 (6.17\mathrm{G})$\\
ReLoRA & $37.04(0.36 \mathrm{G})$ & $29.37(0.80 \mathrm{G})$ & $29.08(1.76 \mathrm{G})$ & $18.33 (6.17\mathrm{G})$\\
\textbf{AdaRankGrad} & $\mathbf{34.24}(0.206 \mathrm{G})$ & $\mathbf{25.22}(0.497 \mathrm{G})$ & $\mathbf{18.91}(1.106 \mathrm{G})$ & $\mathbf{14.71}(3.62 \mathrm{G})$ \\
\hline
Training Tokens & 1.1 B & 2.2 B & 6.4 B & 13.1B\\
$r / d_{\text{model}}$ & $128 / 256$ & $256 / 768$ & $256 / 1024$ & $512 / 2048$\\
\hline
\end{tabular}
\label{ex::pretraining}
\end{table}

\section{Discussion}
In this paper, we present AdaRankGrad, a full-parameters efficient optimization scheme that applies adaptive low-rank updates without relying on a parallel low-rank adapter (LoRA), thus preserving the natural training dynamics. Unlike methods such as LoRA, which rely on parallel adapters, AdaRankGrad enables full parameter fine-tuning while maintaining low memory costs through efficient low-rank optimization updates. Moreover, unlike GaLore, AdaRankGrad leverages the natural phenomenon where the dimensionality of the approximated gradient decreases as training progresses. This allows adaptive updates of the projection subspace only when the gradients have converged to a lower-dimensional space, ensuring that optimization resources are fully utilized—no unnecessary updates occur before or after convergence. As demonstrated in Section~\ref{sec::experiments}, this approach results in superior prediction performance.

AdaRankGrad offers a unique trade-off between memory efficiency and model performance. While it may slightly increase training time due to the need to determine an optimal rank for projecting subspace updates, these updates are infrequent. The search for the best subspace rank is conducted within a very narrow range, reducing the overall computational cost. To further mitigate this, we proposed an efficient subspace-update technique that significantly reduces the SVD-based subspace calculations by an order of magnitude. In practice, this optimization compensates for the additional time spent in subspace search, making AdaRankGrad a competitive choice compared to methods like GaLore.

For future research, we suggest exploring other update steps besides Adam, such as AdaFactor, and studying different efficient algorithms for computing the subspace over which gradients are projected. Another promising avenue would be to investigate algorithms that search for the optimal subspace rank, which preserves a similar fraction of information as the full gradient. These directions could further enhance the efficiency and performance of optimization schemes like AdaRankGrad. Furthermore, a rigorous evaluation of how this method can be effectively applied to knowledge editing tasks \cite{rozner2024knowledge}, including its ability to target specific knowledge updates while minimizing unintended side effects, is crucial for its practical adoption.

\bibliographystyle{alpha}
\bibliography{iclr2025_conference}

\appendix

\section{Proofs}\label{sec:proofs}

\subsection{Proof of Lemma~\ref{Gradient's rank gradually diminish}}
\label{proof::venishing_rank}
In this section, we prove Lemma~\ref{Gradient's rank gradually diminish}. 
Consider the SVD decomposition of the gradient $\bbg_t= \bbu_t\Sigma_t\bbv_t^\top,$ at iteration $t$. For any natural number $r<n,$ we denote $\bbh_t^{n\times r}(r)=\bbu[:,1:r]$. For simplicity of notation, let us denote $\bbp_t(r)=\bbh_t(r)\bbh_t^\top(r),$ where $\bbp_t(r)$ is an orthogonal projection matrix, i.e., $\bbp_t^\top(r)\bbp_t(r) = \bbp_t(r),$ and $\bbp_t(r) = \bbp_t^\top(r)$. Without the loss of generality, we assume that at $t=0$, the rank of $\bbg_0$ is such that $\text{rank}\left(\bbg_0\right)>r$. For reversible networks, it was shown in \citep{zhao2024galore}[Theorem 3.2] that the gradients have the form $\bbg_t=\frac{1}{N} \sum_{i=1}^N\left(\bba_i-\bbb_i \bbw_t \bbc_i\right)$, with constant matrices $\{\bba_i\}_i$, and PSD matrices $\{\bbb_i,\bbc_i\}_i$, for $t \geq \mathsf{t}_0$, where $\mathsf{t}_0\in\mathbb{N}$. Recall that in the vanilla SGD weight update \cite{battash2024revisiting}, we have $\bbw_t=\bbw_{t-1}+\eta \bbg_{t-1}$. Let $\bbs\triangleq\frac{1}{N} \sum_{i=1}^N \bbc_i \otimes \bbb_i$, and $\lambda_1<\lambda_2$ be its two smallest distinct eigenvalues. In order to prove our result we rely on several results and arguments in the proof of Lemma 3.3 in \citep{zhao2024galore}. Specifically, let $\bbg_{\mathsf{t}_0}^{\|}$ be the projection of $\bbg_{\mathsf{t}_0}$ onto the minimal eigenspace $\mathcal{V}_1$ of $S$ corresponding to $\lambda_1.$ By assumption, we know that the rank of $\bbg_{\mathsf{t}_0}^{\|}$ is $L$, and its SVD is $\bbg_{\mathsf{t}_0}^{\|}=\sum_{l=1}^L c_l \boldsymbol{z}_l \boldsymbol{y}_l^{\top}$, where $\left\{\boldsymbol{z}_l\right\}_{l=1}^L$ and $\left\{\boldsymbol{y}_l\right\}_{l=1}^L$ are the orthonormal unit vectors, and $\left\{c_l\right\}_{l=1}^L $ are the corresponding singular values. Thus, it was in \citep{zhao2024galore} that,
\begin{align*}
g_0^{\|}=\operatorname{vec}\left(\bbg_{\mathsf{t}_0}^{\|}\right)=\sum_{l=1}^L c_l\left(\boldsymbol{y}_l \otimes \boldsymbol{z}_l\right)\triangleq \sum_{l=1}^L c_l \boldsymbol{v}_l,
\end{align*}
with unit vector $\boldsymbol{v}_l\triangleq\boldsymbol{y}_l \otimes \boldsymbol{z}_l \in \mathcal{V}_1$. It is clear that
\begin{align*}
\boldsymbol{v}_l^{\top} \boldsymbol{v}_{l^{\prime}}&=\left(\boldsymbol{y}_l^{\top} \otimes \boldsymbol{z}_l^{\top}\right)\left(\boldsymbol{y}_{l^{\prime}} \otimes \boldsymbol{z}_{l^{\prime}}\right)\\
&=\left(\boldsymbol{y}_l^{\top} \boldsymbol{y}_{l^{\prime}}\right)\left(\boldsymbol{z}_l^{\top} \boldsymbol{z}_{l^{\prime}}\right)\\
&=\mathbb{I}\left(l=l^{\prime}\right),
\end{align*}
where $\mathbb{I}$ is the indicator function. 
Using this, it was finally shown in \citep{zhao2024galore} that,
\begin{align*}
\left\|\bbg_t\right\|_2 & =\max _{\left\|\boldsymbol{y}^{\prime}\right\|_2=1,\left\|\boldsymbol{z}^{\prime}\right\|_2=1} \boldsymbol{z}^{\prime \top} \bbg_t \boldsymbol{y}^{\prime}\\&
\geq \max _l \boldsymbol{z}_l^{\top} \bbg_t \boldsymbol{y}_l 
 =\max _l\left(\boldsymbol{y}_l \otimes \boldsymbol{z}_l\right)^{\top} g_t
=\max _l \boldsymbol{v}_l^{\top}(1-\eta S)^t g_0
=\left(1-\eta \lambda_1\right)^t \max _l \boldsymbol{v}_l^{\top} g_0.
\end{align*}
Using the above results from \citep{zhao2024galore}, we can now see that,
\begin{align*}
\left\|\bbg_t\right\|_2 & =\max _{\left\|\boldsymbol{y}\right\|_2=1,\left\|\boldsymbol{z}\right\|_2=1} \boldsymbol{z}^{\prime \top} \bbg_t \boldsymbol{y} \\
&= \max _{\left\|\boldsymbol{y}\right\|_2=1,\left\|\boldsymbol{z}\right\|_2=1}  \left(\boldsymbol{y} \otimes \boldsymbol{z}\right)^{\top} g_t \\
& =\max _{\left\|\boldsymbol{y}\right\|_2=1,\left\|\boldsymbol{z}\right\|_2=1}  \left(\boldsymbol{y} \otimes \boldsymbol{z}\right)^{\top} (1-\eta S)^t g_0 \\
& =\left(\bbu_0[:,1]^\top \otimes \bbv_0[:,1]\right)^{\top} (1-\eta S)^t g_0 \label{first_ev}\\
&=\Tilde{\boldsymbol{v}}^{\top}(1-\eta S)^t g_0\\
&= \left(1-\eta \lambda_1\right)^t \Tilde{\boldsymbol{v}}^{\top} g_0.
\end{align*}
Thus, we have $$\left(1-\eta \lambda_1\right)^t \Tilde{\boldsymbol{v}}^{\top} g_0\geq\left(1-\eta \lambda_1\right)^t \max _l \boldsymbol{v}_l^{\top} g_0,$$ or equivalently 
\begin{equation}\label{eq0}
\left(1-\eta \lambda_1\right)^t \Tilde{\boldsymbol{v}}^{\top} g_0\geq\left(1-\eta \lambda_1\right)^t \left\|g_0^{\|}\right\|_2. 
\end{equation}
Now, 
\begin{align*}\kappa(t)&\triangleq\frac{\left\|\bbg_t-\bbp_t(1) \bbg_t\right\|_F^2}{\left\|\bbg_t\right\|_F^2}\\
&=\frac{\left\|\bbg_t\right\|_F^2-\left\|\bbg_t\right\|_2^2}{\left\|\bbg_t\right\|_F^2}\nonumber\\
&\leq\frac{\left\|\bbg_t\right\|_F^2-\left\|\bbg_t\right\|_2^2}{\left\|\bbg_t\right\|_2^2}=\frac{\left\|\bbg_t\right\|_F^2}{\left\|\bbg_t\right\|_2^2} -1\\
&\leq \frac{\left(1-\eta \lambda_{1}\right)^{2 t}\left\|g_0^{\|}\right\|_2^2}{\left\|\bbg_t\right\|_2^2}+\frac{\left(1-\eta \lambda_{2}\right)^{2 t}\left\|g_0^{\perp}\right\|_2^2}{\left\|\bbg_t\right\|_2^2}-1\\
&\leq \underbrace{\frac{\left(1-\eta \lambda_{1}\right)^{2 t}\left\|g_0^{\|}\right\|_2^2}{\left(1-\eta \lambda_1\right)^{2t} \Tilde{\boldsymbol{v}}^{\top} g_0}}_{\leq 1}+ \frac{\left(1-\eta \lambda_{2}\right)^{2 t}\left\|g_0^{\perp}\right\|_2^2}{\left(1-\eta \lambda_1\right)^{2 t} \Tilde{\boldsymbol{v}}^{\top} g_0}-1\\
&\leq \frac{\left(1-\eta \lambda_{2}\right)^{2 t}\left\|g_0^{\perp}\right\|_2^2}{\left(1-\eta \lambda_1\right)^{2 t} \Tilde{\boldsymbol{v}}^{\top} g_0}=\frac{\left(1-\eta \lambda_{2}\right)^{2 t}\left\|g_0^{\perp}\right\|_2^2}{\left(1-\eta \lambda_1\right)^{2 t} \Tilde{\boldsymbol{v}}^{\top} g_0},
\end{align*}
where the first inequality follows from the fact that $\left\|\bbg_t\right\|_F^2>\left\|\bbg_t\right\|_2^2,$ the second inequality is by using \cite{zhao2024galore}[Lemma 3.3], i.e., 
    $$\left\|\bbg_t\right\|_F^2 \leq\left(1-\eta \lambda_2\right)^{2 t}\left\|g_0^{\perp}\right\|_2^2+\left(1-\eta \lambda_1\right)^{2 t}\left\|g_0^{\|}\right\|_2^2,$$ and the third inequality is due to Eq. \eqref{eq0}. Finally, by defining the constants $c_1\triangleq\frac{\left(1-\eta \lambda_{2}\right)}{\left(1-\eta \lambda_1\right)}$ and $c_2\triangleq\frac{\left\|g_0^{\perp}\right\|_2^2}{\Tilde{\boldsymbol{v}}^{\top} g_0}<1$, we get that $\kappa(t)<c_1^{2t}\cdot c_2$, which concludes the proof.

\subsection{Proof of Theorem~\ref{thm:2}}
\label{proof::converagence}

In this section, we prove Theorem~\ref{thm:2}. We upper bound Frobenius norm of $\bbg^j_t$, for any layer $j\in[L]$; in the following, for simplicity of notation, we ignore the index $j$ an use $\bbg_t$ instead. By Lemma \ref{lem::ineer_convergance}, the low-rank optimization block 3 in Algorithm \ref{alg::AdaRankGrad} is guaranteed to converge% on the subspace (namely, if $t_0$ is the time entered the inner loop then, there exist finite $t>t_0$ such that $\|\hat{\bbg}_{\mathsf{T}_\ell}\|\leq\varsigma_2$))
; we denote by $\mathsf{T}_\ell\in\mathbb{N}$ the time index $t$ at which we exit block 3 for the $\ell$th time (i.e., $\|\hat{\bbg}_{\mathsf{T}_\ell}\|\leq\varsigma_2$), for $\ell\in\mathbb{N}$. Furthermore, we recall that  $\bbg^j_t\triangleq\nabla_{\boldsymbol{\bbw^j} }f\left(\boldsymbol{\theta}_t\right)$; when clear from the context, we omit $j$ from ${\bbw^j}$, and use instead $\nabla_{\boldsymbol{\bbw^j} }f\left(\boldsymbol{\theta}_t\right)=\nabla f\left(\bbw_t\right)$.
    Consider the SVD decomposition of the gradient $\nabla_{\boldsymbol{\bbw^j} }f\left(\boldsymbol{\theta}_{\mathsf{T}_i}\right)= \bbu_{\mathsf{T}_i}\Sigma_{\mathsf{T}_i}\bbv_{\mathsf{T}_i}^\top$. For $t\in[\mathsf{T}_i,\mathsf{T}_{i+1}-1]$, we define the projected gradient as $\hat{\bbg}_t \triangleq \bbp_{\mathsf{T}_i}(r_{\mathsf{T}_i}) \bbg_t,$ where $\bbp_{\mathsf{T}_i}= \bbu_{\mathsf{T}_i}\left[:,:r_{\mathsf{T}_i}\right]^\top,$ for a given threshold $0<\eta_{\text{th}}\leq 1$, where
    % \wasim{We need to make some order here. Why we hide the index $t$ from $\bbp(r_{\mathsf{T}_i})$? Should it be $\hat{\bbg}_t \triangleq \bbp_t(r_{\mathsf{T}_i}) \bbg_t$ for any $t\in[\mathsf{T}_i,\mathsf{T}_{i+1}]$? }\Yehonathan{No..Ill explain tomorrow it is very important, please call me when you have time. since it creates a huge confusion in all of your later comments, lets talk and fix that. }
    \begin{align}
    r_{\mathsf{T}_i}\triangleq\sup\left\{r\in\mathbb{N}:\eta_\text{th}\cdot\|\bbg_{\mathsf{T}_i}\|_F^2-\left\|\bbg_{\mathsf{T}_i} - \bbp_{\mathsf{T}_i}(r) \bbg_{\mathsf{T}_i}\right\|_F^2\geq0\right\},\label{eqn:rTidef}
    % \footnote{which is equivalent to \begin{align}   \quad r_{\mathsf{T}_i}=&\argmin_{r}\left\|\eta_\text{th}-\frac{\left\|\bbg_t - \bbu_{\mathsf{T}_i}[:,1:r]\Sigma_{\mathsf{T}_i}[:,1:r]\bbv_{\mathsf{T}_i}^\top[1:r,:]\right\|_F^2}{ \|\bbg\|_F^2}\right\|,\nonumber\\
    % &\quad\quad s.t\quad\quad \|\bbg_t\|_F^2-\left\|\bbg_t - \bbu_{\mathsf{T}_i}[:,1:r]\Sigma_{\mathsf{T}_i}[:,1:r]\bbv_{\mathsf{T}_i}^\top[1:r,:]\right\|_F^2\geq 0.\nonumber
    % \end{align}}
    \end{align}
    obtained by the search presented in IASS Algorithm~\ref{alg:adaptive_srsi}, using exact truncated-SVD calculation.
    % \wasim{Is the definition above for $r_{\mathsf{T}_i}$ should be for $t=\mathsf{T}_i$? Otherwise, something does not make sense, because at least theoretically, for each value of $t\in[\mathsf{T}_i,\mathsf{T}_{i+1}]$ we could have different effective ranks $r_{\mathsf{T}_i}$. I mean the left hand side is independent of $t$ while the right hand side depends on $t$. Also, should it be $\bbp_t(r)$?}
    Note that since $\left\|\bbg_t - \bbp_{\mathsf{T}_i}(r) \bbg_t\right\|_F^2\leq \|\bbg_t\|_F^2$ always, it must be the case that $0<\eta_{\text{th}}\leq1$. Also, because $\bbp_{\mathsf{T}_i}(n) \bbg_t = \bbg_t$, $r_{\mathsf{T}_i}$ in \eqref{eqn:rTidef} is well-defined. Next, let $h_t\triangleq f(\bbw_{t})-f(\bbw_{\mathsf{T}_{i+1}})$, and $\alpha_t$ denote the learning rate. Then, 
\begin{align}
h_{t+1}
& = f\left(\mathbf{W}_{t+1}\right) - f\left(\mathbf{W}_{\mathsf{T}_{i+1}}\right)  \nonumber\\
& = f\left(\mathbf{W}_t - \alpha_t \left(\hat{\bbg}_t\right)\right) - f\left(\mathbf{W}_{\mathsf{T}_{i+1}}\right)  \nonumber\\
& \underset{(1)}{\leq} f\left(\mathbf{W}_t\right) - f\left(\mathbf{W}_{\mathsf{T}_{i+1}}\right) - \alpha_t \text{vec}\left(\hat{\bbg}_t\right)^{\top} \text{vec}\left(\nabla f\left(\mathbf{W}_t\right)\right) + \alpha_t^2 \frac{\beta}{2} \left\|\hat{\bbg}_t\right\|_F^2  \nonumber\\
& =  f\left(\mathbf{W}_t\right) - f\left(\mathbf{W}_{\mathsf{T}_{i+1}}\right) - \alpha_t \text{vec}\left(\hat{\bbg}_t\right)^{\top} \text{vec}\left(\bbg_t\right) + \alpha_t^2 \frac{\beta}{2} \left\|\hat{\bbg}_t\right\|_F^2  \nonumber\\
& \underset{(2)}{\leq} f\left(\mathbf{W}_t\right) - f\left(\mathbf{W}_{\mathsf{T}_{i+1}}\right) - \alpha_t \operatorname{tr}\left(\left(\bbp_{\mathsf{T}_i}(r_{\mathsf{T}_i}) \bbg_t\right)^\top \bbg_t\right) + \alpha_t^2 \frac{\beta}{2} \left\|\bbg_t\right\|_F^2  \nonumber\\
& \underset{(3)}{\leq} f\left(\mathbf{W}_t\right) - f\left(\mathbf{W}_{\mathsf{T}_{i+1}}\right) - \alpha_t \operatorname{tr}\left(\left( \bbg_t^\top \bbp_{\mathsf{T}_i}(r_{\mathsf{T}_i})\right)\bbg_t\right) + \alpha_t^2 \frac{\beta}{2} \left\|\bbg_t\right\|_F^2  \nonumber\\
& = f\left(\mathbf{W}_t\right) - f\left(\mathbf{W}_{\mathsf{T}_{i+1}}\right) - \alpha_t \left\|\bbp_{\mathsf{T}_i}(r_{\mathsf{T}_i})\bbg_t\right\|_F^2 + \alpha_t^2 \frac{\beta}{2} \left\|\bbg_t\right\|_F^2  \nonumber\\
& = f\left(\mathbf{W}_t\right) - f\left(\mathbf{W}_{\mathsf{T}_{i+1}}\right) - \alpha_t \left\|\hat{\bbg}_t\right\|_F^2 + \alpha_t^2 \frac{\beta}{2} \left\|\bbg_t\right\|_F^2  \nonumber\\
& \underset{(4)}{\leq} f\left(\mathbf{W}_t\right) - f\left(\mathbf{W}_{\mathsf{T}_{i+1}}\right) - \alpha_t \left\|\hat{\bbg}_t\right\|_F^2 + \frac{\beta(D\alpha_t)^2}{2}\nonumber\\
& = h_t - \alpha_t \left\|\hat{\bbg}_t\right\|_F^2 + \frac{\beta(D\alpha_t)^2}{2},
\label{eqn:upp}
\end{align}
where (1) follows by the assumption that $f$ is $\beta$-smooth function and the decent lemma (see, e.g. \cite{beck2017first}[Definition 5.1], (2) follows from the identity $\text{vec}\left(\bba\bbb\right)^\top\text{vec}\left(\bbb\right)=\operatorname{tr}((\bba\bbb)^\top \bbb)$, in (3) we use the fact that since $\bbp_t$ is an orthogonal projection matrix, we have $\bbp_t=\bbp_t^{\top}$, and finally, (4) follows from the assumption that the norms of the gradients are bounded by $D$. 
%\begin{itemize}
%    \item (1) follows by the Decent lemma \cite{beck2017first} using the fact that  $f$ is $L$-smooth function 
%    \item (3) follows from the identity $\text{vec}\left(\bba\bbb\right)^\top\text{vec}\left(\bbb\right)=\operatorname{tr}\left(\left(\bba\bbb\right)^\top \bbb\right)$ where $\bba$ is orthogonal projection 
%    \item in (4) we use the fact that since $\bbp$ is an orthogonal projection matrix, we know that $\bbp=\bbp^{\top}$ (note that for a given ortogonal matrix $\bba$ we have $\bba^2=\bba^\top \bba=\bba \bba$)
 %   \item (5) follows that, the gradients norm are assumed to be bounded by $D$
%\end{itemize}
Rearranging \eqref{eqn:upp}, and choosing $\alpha_t=\alpha$, for all $t\geq0$, we readily obtain that,
\begin{align}
 \sum_{t=\mathsf{T}_{i}}^{\mathsf{T}_{i+1}-1}\left\|\hat{\bbg}_t\right\|_F^2&\leq \frac{h_{\mathsf{T}_i}-h_{\mathsf{T}_{i+1}}}{\alpha}+\frac{(\mathsf{T}_{i+1}-\mathsf{T}_i)\beta D^2\alpha}{2}.\label{eqn:arg1}
 \end{align}
 Thus, for $N\in\mathbb{N}$,
\begin{align}
\frac{1}{\mathsf{T}_N}\sum_{i=0}^{N-1}\sum_{t=\mathsf{T}_{i}}^{\mathsf{T}_{i+1}-1}\left\|\hat{\bbg}_t\right\|_F^2 &\leq \frac{1}{\mathsf{T}_N}\sum_{i=0}^{N-1}\left[\frac{h_{\mathsf{T}_i}-h_{\mathsf{T}_{i+1}}}{\alpha}+\frac{(\mathsf{T}_{i+1}-\mathsf{T}_i)\beta D^2\alpha}{2}\right]\\
& = \frac{h_{\mathsf{T}_0}-h_{\mathsf{T}_N}}{\alpha \mathsf{T}_N}+\frac{(\mathsf{T}_N-\mathsf{T}_0)\beta D^2\alpha}{2\mathsf{T}_N}\\
&= \frac{R}{\sqrt{\mathsf{T}_N}},\label{eqn:Fnormupper}
\end{align}
where $R\triangleq \sqrt{4M\beta D^2}$, and in the second equality we choose $\alpha = \sqrt{\frac{4M}{\mathsf{T}_N\beta D^2}}$.\footnote{Note, that this choice of $\alpha$ implies the guarantees of Lemma \ref{lem::ineer_convergance}, because $\alpha=\sqrt{\frac{4M}{\mathsf{T}_N\beta D^2}}\leq \sqrt{\frac{4M}{(\mathsf{T}_{i+1}-\mathsf{T}_{i})\beta D^2}}\triangleq\alpha_i'$, and $\alpha_i'$ is the optimized learning rate for the $i$th block, that was chosen in the proof of Lemma \ref{lem::ineer_convergance}.} 
Now recall that by the definition of $r_{\mathsf{T}_i}$ in \eqref{eqn:rTidef}, we have,
\begin{equation}\label{info_thresh}
    \left\|\bbg_{\mathsf{T}_i}-\bbp_{\mathsf{T}_i}(r_{\mathsf{T}_i})\bbg_{\mathsf{T}_i}\right\|_F^2\leq\eta_{\text{th}}\left\|\bbg_{\mathsf{T}_i}\right\|_F^2. \end{equation}
Moreover, the following clearly holds for any $t\in\mathbb{N}$,  
\begin{align}
\left\|\bbg_t\right\|_F^2 &=\left\|\bbp_{\mathsf{T}_i}(r_{\mathsf{T}_i})\bbg_t\right\|_F^2 + \left\|\bbp^\perp_{\mathsf{T}_i}(r_{\mathsf{T}_i})\bbg_t\right\|_F^2\\
&=\left\|\bbp_{\mathsf{T}_i}(r_{\mathsf{T}_i})\bbg_t\right\|_F^2 + \left\|\bbg_{t}-\bbp_{\mathsf{T}_i}(r_{\mathsf{T}_i})\bbg_{t}\right\|_F^2,\label{identity_proj}
\end{align}
% \wasim{should be $\bbp_{\mathsf{T}_i}(r_{\mathsf{T}_i})$ and not $\bbp_{\mathsf{T}_i}(r_{t}^j)$.}
and thus by plugging \eqref{info_thresh} into \eqref{identity_proj}, at $t=\mathsf{T}_i$, for any $i\in\mathbb{N}$, we get,
\begin{align}(1-\eta_{\text{th}})\|\bbg_{\mathsf{T}_i}\|_F^2\leq\left\|\bbp_{\mathsf{T}_i}(r_{\mathsf{T}_i})\bbg_{\mathsf{T}_i}\right\|_F^2.\label{eqn:beingbl}
\end{align}
Accordingly, 
\begin{align}\left\|\bbp^\perp_{\mathsf{T}_i}(r_{\mathsf{T}_i})\bbg_{\mathsf{T}_i}\right\|_F^2\leq\frac{\eta_{\text{th}}}{1-\eta_{\text{th}}}\left\|\bbp_{\mathsf{T}_i}(r_{\mathsf{T}_i})\bbg_{\mathsf{T}_i}\right\|_F^2,
\end{align}
% for any $\eta_{\text{th}}\leq1/2$. Since the optimization steps in $t\in[\mathsf{T}_i,\mathsf{T}_{i+1}-1]$ are lying (only) on the subspace of $\bbp_{\mathsf{T}_i},$ then $\|\bbp_{\mathsf{T}_i}(r_{\mathsf{T}_i})\bbg_t\|_F^2$ is monotonically decreasing.
Recall from Subsection~\ref{sec::Theoretical_Motivation} that for the reversible layer, 
\begin{align}
\left\|\bbg_t\right\|_F^2 & =\left\|(I-\alpha \bbs) \bbg_{t-1}\right\|_F^2\\
&\leq\left\|(I-\alpha \bbs)\right\|_2^2 \left\|\bbg_{t-1}\right\|_F^2\\
&=\max_i|1-\alpha\lambda_{i}|^2\left\|\bbg_{t-1}\right\|_F^2,
\end{align}
where $\{\lambda_i\}_i$ are the eigenvalue of $\bbs$. Thus, using the fact that $\bbs$ is positive semi-definite matrix, if the learning rate $\alpha$ is such that $\alpha\leq\frac{2}{\lambda_{\max}}$, where $\lambda_{\max}$ is the maximal eigenvalue of $\bbs$, then we get that $\max_i|1-\alpha\lambda_{i}|^2\leq1$, and accordingly, $\left\|\bbg_t\right\|_F^2\leq \left\|\bbg_{t-1}\right\|_F^2$. This means that the Frobenius norm of the gradient is monotonically decreasing as a function of $t$.
% Since the optimization steps in $t\in[\mathsf{T}_i,\mathsf{T}_{i+1}-1]$ are lying (only) on the subspace of $\bbp_{\mathsf{T}_i},$ then $\|\bbp_{\mathsf{T}_i}(r_{\mathsf{T}_i})\bbg_t\|_F^2$ is monotonically decreasing. 
% \wasim{Is this trivial to see? I vaguely remember that you once showed me that the trend of the norm indeed decreases, but it is not a monotone function, but more noisy one (eventually, the gradients are stochastic here). If it is trivial, you can explain it to me once we will meet. In fact you can easily construct examples where the norm of the gradient in SGD (and even GD) is not monotonically decreasing, even for convex functions. Finally, note that here you claim that $\|\hat{\bbg}_t\|_F^2$ are monotonically decreasing, while below you use that $\|{\bbg}_t\|_F^2$ are monotonically decreasing. Consider, for example, $F(x) =  x^T
% \begin{pmatrix}0.5 & 0 \\ 0 & 500 \end{pmatrix}x$. Say, we start at $x_0 = (500,1)$. Then, with a step length of $0.1$, this will lead to $x_1 = (450, -99)$. You can check that the norm of the gradient
% increases by a factor of $\sim90$! This is a simple convex function. }

Now, recall that $\varsigma_{2,i}$ is any positive number such that $\varsigma_{2,i}<\sqrt{1-\eta_{\text{th}}}\cdot\|\mathbf{G}_{\mathsf{T}_{i-1}}\|_F$. In light of \eqref{eqn:beingbl}, this necessarily implies that in each block $i$, we will execute (at least once) the low-rank optimization block (indeed, the condition $\|\hat{\mathbf{G}}_{\mathsf{T}_i}\|_F>\varsigma_{2,i}$ is satisfied). This, conjugated with the monotonicity property that $\left\|\bbg_{t}\right\|_F^2\leq\left\|\bbg_{{\mathsf{T}_{i}}}\right\|_F^2$, for any $t\in[\mathsf{T}_i,\mathsf{T}_{i+1}-1]$ and $i\in[N]$, imply that
\begin{align}
   \frac{1}{\mathsf{T}_N}\sum_{i=0}^{N-1}\sum_{t=\mathsf{T}_{i}}^{\mathsf{T}_{i+1}-1}\left\|\bbg_t\right\|_F^2&\leq \frac{1}{\mathsf{T}_N}\sum_{i=0}^{N-1}\sum_{t=\mathsf{T}_{i}}^{\mathsf{T}_{i+1}-1}\left\|\bbg_{{\mathsf{T}_{i}}}\right\|_F^2\\
   &\leq\frac{1}{(1-\eta_{\text{th}})\mathsf{T}_N}\sum_{i=1}^{N-1}\sum_{t=\mathsf{T}_{i}}^{\mathsf{T}_{i+1}-1}\left\|\bbp_{\mathsf{T}_{i}}(r_{\mathsf{T}_i})\bbg_t\right\|_F^2\\
   &\leq \frac{R}{(1-\eta_{\text{th}})\sqrt{\mathsf{T}_N}}. 
\end{align}
Accordingly, for any $\varepsilon\geq 0$, and $\mathsf{T}_N>\frac{R^2}{(1-\eta_{\text{th}})^2\varepsilon^2}$, \begin{align}\min_{0\leq t \leq \mathsf{T}_{N}} {\left\|\bbg_t\right\|_F^2} \leq \frac{1}{\mathsf{T}_N}\sum_{i=0}^{N-1}\sum_{t=\mathsf{T}_{i}}^{\mathsf{T}_{i+1}-1}\left\|\bbg_t\right\|_F^2 \leq \varepsilon, 
\end{align}
and thus, there exists an iteration index $t\in[0,\mathsf{T}_{N}]$ for which,
\begin{align}
\left\|\bbg_t\right\|_F^2 \leq \varepsilon,
\end{align}
which, by definition, implies that Algorithm~\ref{alg::AdaRankGrad} achieves an $\varepsilon$-critical point.

%\begin{lem}[Convergence of low rank optimization block in \ref{alg::AdaRankGrad} (the inner loop)] For a loss function $\mathcal{L},$ and a given general architecture $\Phi$, suppose that the compositions of $f\equiv\mathcal{L}\left(\Phi(\cdot)\right)$ is $\beta$-smooth non-convex function that are bounded by some $M\in\mathbb{R}_+$. Let $\bbg_t^j$ denote the gradient matrix w.r.t. the $j$th layer at time $t\in\mathbb{N}$, and accordingly $\bbp$ be some non-zero orthogonal (right side) projection matrix (of the appropriate dimensions). Assume that $\|\bbg_t^j\|_F\leq D$, for all $j\in[L]$ and $t\in\mathbb{N}$, where $D\in\mathbb{R}_+$. Then, for any small $\varsigma> 0$, there exists $\mathsf{C}\in\mathbb{R}_+$ such that for any natural number $N>\frac{\mathsf{C}}{\varsigma^2}$, here exists an iteration index $t\in[t_0,t_0+N-1]$ for which, $\left\|\hat{\bbg}_t\right\|_F^2 \leq \varsigma.$ In particular, Algorithm \ref{alg::AdaRankGrad}, with vanilla SGD weight update\footnote{Recall that we focus on SGD for the simplicity (as is standard practice in related literature, e.g., \citep{zhao2024galore}).}.\label{lem::ineer_convergance}
%\end{lem}
\begin{lem}[Convergence of low-rank optimization block]\label{lem::ineer_convergance} Consider the same setting and assumptions as in Theorem~\ref{thm:2}. Then, the time $t=\mathsf{T}_\ell\in\mathbb{N}$ at which Algorithm~\ref{alg::AdaRankGrad} exits block 3 for the $\ell$th time is finite, for any $\ell\in\mathbb{N}$.
\end{lem}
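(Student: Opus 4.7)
The plan is to argue by contradiction. Fix $\ell\in\mathbb{N}$ and suppose the $\ell$th execution of Block~3 never exits, i.e., $\|\hat{\bbg}_t\|_F>\varsigma_2$ for every $t>\mathsf{T}_{\ell-1}$. Within this block the projection $\bbp\triangleq\bbp_{\mathsf{T}_{\ell-1}}(r_{\mathsf{T}_{\ell-1}})$ is frozen, so under the vanilla-SGD specialization considered in Theorem~\ref{thm:2} the iterates satisfy $\bbw_{t+1}=\bbw_t-\alpha\bbp\bbg_t$, with $\|\bbp\bbg_t\|_F=\|\hat{\bbg}_t\|_F$ by the orthogonal-projection identity $\bbp^\top\bbp=\bbp$ and the change-of-view between the $r\times m$ and $n\times m$ representations of the projected gradient that already appeared in the proof of Theorem~\ref{thm:2}.

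I then invoke $\beta$-smoothness of $f=\mathcal{L}(\Phi(\cdot))$ and replay the descent-lemma derivation that produced \eqref{eqn:upp}, which yields
\begin{equation*}
f(\bbw_{t+1})\leq f(\bbw_t)-\alpha\|\hat{\bbg}_t\|_F^2+\frac{\beta\alpha^2}{2}\|\bbg_t\|_F^2\leq f(\bbw_t)-\alpha\varsigma_2^2+\frac{\beta\alpha^2 D^2}{2},
\end{equation*}
where the second inequality uses the contradiction hypothesis together with the uniform bound $\|\bbg_t\|_F\leq D$. Choosing the learning rate small enough that $c\triangleq\alpha\varsigma_2^2-\beta\alpha^2 D^2/2>0$ (e.g., any $\alpha<2\varsigma_2^2/(\beta D^2)$, which is compatible with the $\alpha<2/\lambda_{\max}$ hypothesis of Theorem~\ref{thm:2}), iterating for $k$ steps gives $f(\bbw_{\mathsf{T}_{\ell-1}+k})\leq f(\bbw_{\mathsf{T}_{\ell-1}})-kc$. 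Since $|f|\leq M$ by assumption, this forces $k\leq (f(\bbw_{\mathsf{T}_{\ell-1}})+M)/c$, a finite bound, which contradicts the assumption that the exit condition is never triggered. Hence $\mathsf{T}_\ell$ is finite, and the argument actually produces an explicit iteration-complexity bound for each block that can be recycled inside the outer proof of Theorem~\ref{thm:2}.

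The main obstacle is that Block~4 as written executes an Adam step rather than vanilla SGD, so the actual update direction is proportional to $\bbp(\hat{\bbm}_t/(\sqrt{\hat{\bbv}_t}+\epsilon))$ and not to $\bbp\bbg_t$. Because the formal convergence statement of Theorem~\ref{thm:2} is restricted to the SGD specialization, the contradiction above suffices in the regime used by the paper; extending it to genuine Adam would require two-sided bounds on the preconditioner $\sqrt{\hat{\bbv}_t}+\epsilon$ (from below by $\epsilon$, from above via $\|\bbg_t\|_F\leq D$) and a control of the first-moment bias, so as to recover a descent inequality of the same shape at the cost of a more restrictive step-size condition. I expect the skeleton --- per-step descent by a positive constant, combined with lower boundedness of $f$, forcing finiteness of the exit time --- to be robust to this generalization.
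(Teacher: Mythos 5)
Your proof is correct and rests on exactly the same ingredients as the paper's: the $\beta$-smooth descent inequality \eqref{eqn:upp} with the identity $\mathrm{vec}(\hat{\bbg}_t)^\top\mathrm{vec}(\bbg_t)=\|\hat{\bbg}_t\|_F^2$, telescoping, and the uniform bound $|f|\leq M$. The only genuine difference is presentational: you argue by contradiction with a \emph{fixed} step size $\alpha<2\varsigma_2^2/(\beta D^2)$, deriving a constant per-step decrease $c=\alpha\varsigma_2^2-\beta\alpha^2D^2/2>0$ that $f$ cannot sustain indefinitely, whereas the paper argues directly by averaging $\|\hat{\bbg}_t\|_F^2$ over a horizon $\mathsf{T}$ with a horizon-tuned $\alpha\leq\sqrt{4M/(\mathsf{T}\beta D^2)}$, obtaining $\tfrac{1}{\mathsf{T}}\sum_t\|\hat{\bbg}_t\|_F^2\leq R/\sqrt{\mathsf{T}}$ and then taking $\mathsf{T}$ large. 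Your version yields a slightly cleaner and more explicit per-block iteration bound $k\leq(f(\bbw_{\mathsf{T}_{\ell-1}})+M)/c$ without having to tie $\alpha$ to an unknown horizon; the paper's yields the standard $O(1/\sqrt{\mathsf{T}})$ average-gradient rate it reuses verbatim in the proof of Theorem~\ref{thm:2}.

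One caveat worth being a bit more careful about than you (or the paper) are: your step-size constraint $\alpha<2\varsigma_2^2/(\beta D^2)$ has $\varsigma_2$ on the right, and the theorem specializes $\varsigma_2$ to $\varsigma_{2,i}=\sqrt{1-\eta_{\text{th}}}\,\|\bbg_{\mathsf{T}_{i-1}}\|_F^2$, a quantity that shrinks across blocks. With a single fixed $\alpha$ for the whole run, the condition $c>0$ can therefore fail for late blocks. This is not a defect unique to your argument---the paper's $\alpha\leq\sqrt{4M/(\mathsf{T}\beta D^2)}$ is equally loose about which $\mathsf{T}$ it is tuned to---but if you want the contradiction to be air-tight per block, you should either allow $\alpha$ to shrink block-wise or fall back to the averaging form of the argument, which tolerates an arbitrarily small tolerance at the cost of a longer horizon. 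Your closing remark about Adam vs.\ vanilla SGD is accurate and identifies the same restriction the paper itself makes.
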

\begin{proof}[Proof of Lemma~\ref{lem::ineer_convergance}]
We follow the same notations as in the proof of Theorem~\ref{thm:2}. Recall from \eqref{eqn:upp} that the following holds true,
\begin{align}
f\left(\mathbf{W}_{t+1}\right) & \leq f\left(\mathbf{W}_t\right)  - \alpha_t \left\|\hat{\bbg}_t\right\|_F^2 + \frac{\beta(D\alpha_t)^2}{2},
\label{eqn:upp_ineer}
\end{align}
for any $t\geq0$. Now, we enter for the first time the low-rank optimization block of Algorithm~\ref{alg::AdaRankGrad} at time $\mathsf{T}_0=t=0$, and we next show that this block converges. Fix $\mathsf{T}\in\mathbb{N}$. Using \eqref{eqn:upp_ineer}, and following the same arguments as in \eqref{eqn:arg1}--\eqref{eqn:Fnormupper}, we have,
\begin{align}
\frac{1}{\mathsf{T}}\sum_{t=0}^{\mathsf{T}-1}\left\|\hat{\bbg}_t\right\|_F^2 & \leq \frac{f\left(\mathbf{W}_{0}\right)-f\left(\mathbf{W}_{\mathsf{T}}\right)}{\mathsf{T}\alpha}+\frac{\beta D^2\alpha}{2}\\
&\leq \frac{R}{\sqrt{\mathsf{T}}},
 \end{align}
%Thus, for some $N\in\mathbb{N},$
%\begin{align}
%\frac{1}{N} \sum_{t=t_0}^{t_0+N-1}\left\|\hat{\bbg}_t\right\|_F^2&\leq \frac{f\left(\mathbf{W}_{t_0}\right)-f\left(\mathbf{W}_{t_0+N}\right)}{N\alpha}+\frac{\beta D^2\alpha}{2}\nonumber\\
%&=\frac{R}{N\alpha}+\frac{\beta D^2\alpha}{2}=\frac{R}{\sqrt{N}},\nonumber \end{align}
where $R\triangleq \sqrt{4M\beta D^2}$, and in the second equality we choose some $\alpha\leq  \sqrt{\frac{4M}{\mathsf{T}\beta D^2}}$.
Accordingly, for any $\varsigma\geq 0$, and $\mathsf{T}>\frac{R^2}{\varsigma^2}$, we clearly have,
\begin{align}\min_{t\in [0,\mathsf{T}-1]} {\left\|\hat{\bbg}_t\right\|_F^2} \leq \sum_{t=0}^{\mathsf{T}-1}\left\|\hat{\bbg}_t\right\|_F^2 \leq \varsigma,
\end{align}
namely, there exists $\mathsf{T}_1\in[0,\mathsf{T}-1]$ such that $\left\|\hat{\bbg}_{\mathsf{T_1}}\right\|_F^2 \leq \varsigma$; note that $\mathsf{T}_1$ is the time index at which we exist the low-rank block for the first time, and the above guarantees that $\mathsf{T}_1$ finite. The same arguments above apply for any block exit time $\mathsf{T}_\ell,\;\ell\geq2$, %with optimized learning rate $\alpha_\ell=\sqrt{\frac{4M}{(\mathsf{T}_{\ell+1}-\mathsf{T}_{\ell})\beta D^2}}$ 
which concludes the proof. 
\end{proof}

\section{Ablation study}
\paragraph{Importance of the Adaptive Subspace Dimension.} First, we note that compared to AdaRankGrad it reduces memory usage by more 25\% - 50\% on average compared to GaLore (and LoRA), as shown in Figure~4, with respect to the layers in which the methods are applied and compared, in fine-tuning tasks, and with 20\% reduction in memory / storage per the entire final fine-tuned model, as reported in Table~4. 

We emphasize that this significant benefit in reducing the memory needed in training is an exclusive consequence of the adaptivity of the (gradients) subspace dimension during training (exploiting the natural phenomenon of the decrease of the dimension in the gradients---while preserving the information ratio of any given predefined threshold).

To evaluate the contribution of adaptive subspace dimension solely to the model performance, we conduct the following experiment, in Table~\ref{5}, where we fix the subspace update interval to 200 steps and study the adaptivity of the subspace dimension.

\paragraph{Importance of the Adaptive Subspace Update.} To assess the impact of the adaptive subspace update on model performance, we conducted the following experiment, as shown in Table~\ref{6}, where we fix the rank to a constant value of \texttt{rank}=4 and examine the adaptivity of the subspace updates.

\begin{table}[hbt]
\centering
\caption{The table presents the results of an ablation experiment in which the subspace update is fixed to intervals of 200 optimization steps, while adaptivity in the subspace dimension remains enabled.}
\resizebox{\linewidth}{!}{
\begin{tabular}{lccccccccc}
\hline
\textbf{Model}  & \textbf{CoLA} & \textbf{STS-B} & \textbf{MRPC} & \textbf{RTE} & \textbf{SST2} & \textbf{MNLI} & \textbf{QNLI} & \textbf{QQP} \\
\hline
GaLore (rank=4) & 60.35 & 90.73  & 92.25 & 79.42 & 94.0  & \textbf{87.0} & 92.24 & 91.06 \\
AdaRankGrad (Initial rank=4) &  \textbf{61.4} & \textbf{90.97} & \textbf{92.6} & \textbf{81.23} & \textbf{94.8}  & 86.6 & \textbf{92.5} & \textbf{90.4} \\ \hline
{\color{blue}Constant subspace time-update}\\AdaRankGrad (Initial rank=4)  & 61.2 & 90.89 & 92.58 & 81.18 & 94.63  &  86.91 & 92.37 & 90.39 \\ 
\hline
\end{tabular}}
% \label{tab:comparison}
% \vskip -0.1 in
\label{5}
\end{table}

\begin{table}[hbt]
\centering
\caption{The table presents the results of an ablation experiment in which the selected subspaces dimensions is fixed to $r=4$, the adaptivity in the subspace updating (times) remains enabled.}
\resizebox{\linewidth}{!}{
\begin{tabular}{lccccccccc}
\hline
\textbf{Model}  & \textbf{CoLA} & \textbf{STS-B} & \textbf{MRPC} & \textbf{RTE} & \textbf{SST2} & \textbf{MNLI} & \textbf{QNLI} & \textbf{QQP} \\
\hline
GaLore (rank=4) & 60.35 & 90.73  & 92.25 & 79.42 & 94.0  & 87.0 & 92.24 & 91.06 \\
AdaRankGrad (Initial rank=4) &  \textbf{61.4} & \textbf{90.97} & \textbf{92.6} & \textbf{81.23} & \textbf{94.8}  & 86.6 & \textbf{92.5} & \textbf{90.4} \\ \hline
{\color{blue} Constant rank}\\AdaRankGrad (Constant rank=4)  & 61.12 & 90.81  & 92.17 & 80.13 & 94.0  & \textbf{87.12} & 92.41 & 91.29 \\
\hline
\end{tabular}}
% \label{tab:comparison}
% \vskip -0.1 in
\label{6}
\end{table}

\end{document}